\newtheorem{theorem}{Theorem}
\newtheorem{proposition}{Proposition}
\newtheorem{fact}{Fact}
\crefname{section}{Sec.}{Secs.}
\Crefname{section}{Section}{Sections}
\Crefname{table}{Table}{Tables}
\crefname{table}{Tab.}{Tabs.}
\begin{document}

%%%%%%%%% TITLE - PLEASE UPDATE
\title{Nested Hyperbolic Spaces for Dimensionality Reduction and Hyperbolic NN Design}

 \author{Xiran Fan\\
 University of Florida\\
Department of Statistics\\
 {\tt\small fanxiran@ufl.edu}
 % For a paper whose authors are all at the same institution,
 % omit the following lines up until the closing ``}''.
 % Additional authors and addresses can be added with ``\and'',
 % just like the second author.
 % To save space, use either the email address or home page, not both
 \and
 Chun-Hao Yang\\
 National Taiwan University\\
Institute of Applied Mathematical Science\\
 {\tt\small chunhaoy@ntu.edu.tw}
  \and
 Baba C.\ Vemuri\\
 Department of CISE, University of Florida\\
 {\tt\small vemuri@ufl.edu}
 }
\maketitle

\begin{abstract}
Hyperbolic neural networks have been popular in the recent past due to their ability to represent hierarchical data sets effectively and efficiently. The challenge in developing these networks lies in the nonlinearity of the embedding space namely, the Hyperbolic space. Hyperbolic space is a homogeneous Riemannian manifold of the Lorentz group which is a semi-Riemannian manifold, i.e.\ a manifold equipped with an indefinite metric. Most existing methods (with some exceptions) use local linearization to define a variety of operations paralleling those used in traditional deep neural networks in Euclidean spaces. In this paper, we present a novel fully hyperbolic neural network which uses the concept of projections (embeddings) followed by an intrinsic aggregation and a nonlinearity all within the hyperbolic space. The novelty here lies in the projection which is designed to project data on to a lower-dimensional embedded hyperbolic space and hence leads to a nested hyperbolic space representation independently useful for dimensionality reduction. The main theoretical contribution is that the proposed embedding is proved to be isometric and equivariant under the Lorentz transformations, which are the natural isometric transformations in hyperbolic spaces. This projection is computationally efficient since it can be expressed by simple linear operations, and, due to the aforementioned equivariance property, it allows for weight sharing. The nested hyperbolic space representation is the core component of our network and therefore, we first compare this ensuing nested hyperbolic space representation -- independent of the network -- with other dimensionality reduction methods such as tangent PCA, principal geodesic analysis (PGA) and HoroPCA. Based on this equivariant embedding, we develop a novel fully hyperbolic graph convolutional neural network architecture to learn the parameters of the projection. Finally, we present experiments demonstrating comparative performance of our network on several publicly available data sets.
\end{abstract}
\newpage
\section{Introduction}\label{sec:intro}
Hyperbolic geometry is a centuries old field of non-Euclidean geometry and has recently found its way into the field of machine learning, in particular into deep learning in the form of hyperbolic neural networks (HNNs) or hyperbolic graph convolutional networks (HGCNs) and recently for dimensionality reduction of data embedded in the hyperbolic space. In this paper, we will discuss both problems namely, dimensionality reduction in hyperbolic spaces and HNN architectures.  In particular, we will present novel techniques for both these problems. In the following, we present literature review of the two above stated problems and establish the motivation for our work. A word on terminology, we will use the term hyperbolic neural network and hyperbolic graph (convolutional) neural network synonymously for the rest of the paper.

\begin{figure}[t]
  \centering
  %\fbox{\rule{0pt}{2in} \rule{0.9\linewidth}{0pt}}
   \includegraphics[width=0.4\linewidth]{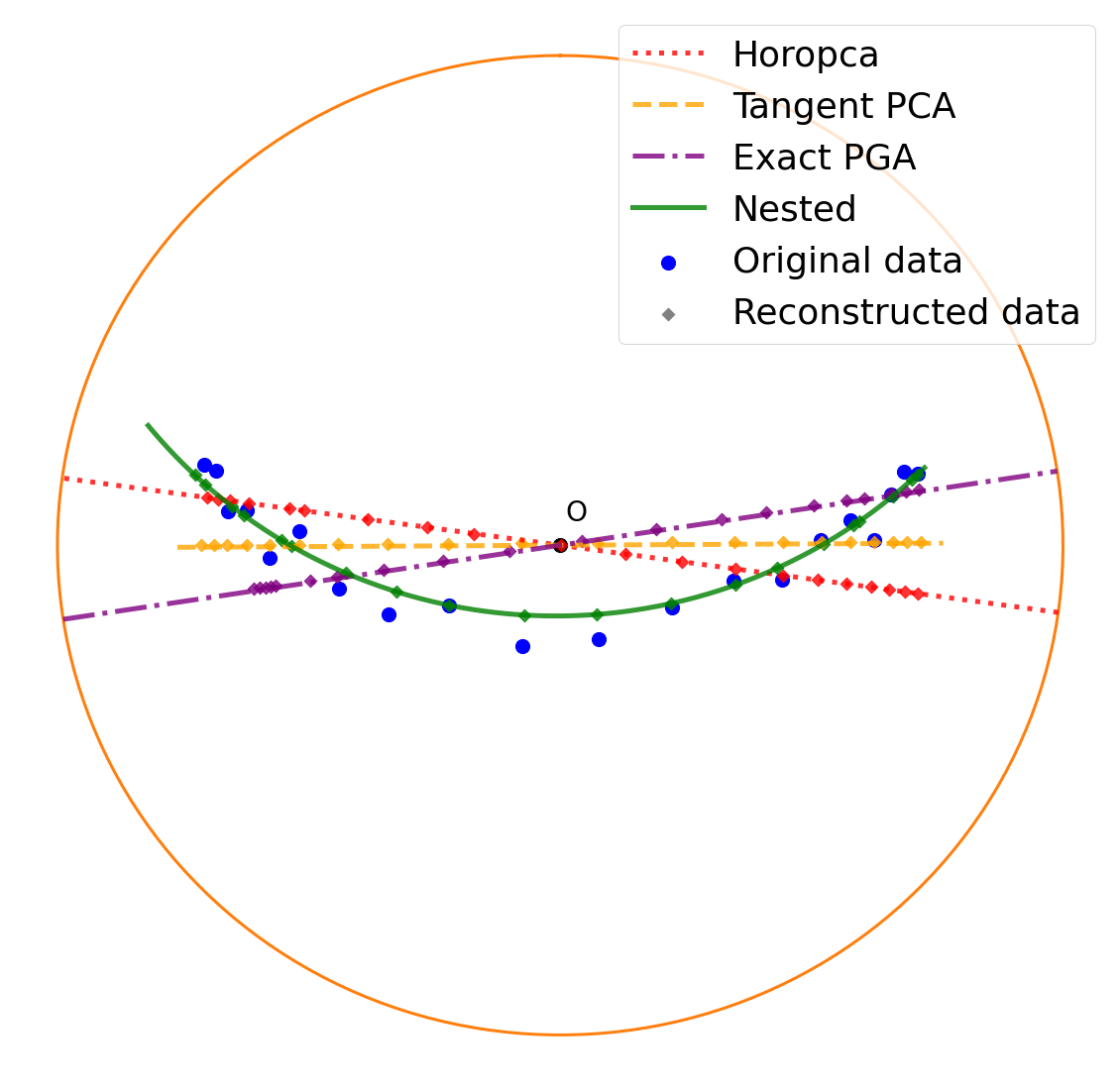}
   \caption{Projections of data from a 2-dimensional hyperbolic space to a 1-dimensional hyperbolic space using different dimensionality reduction methods. The results are visualized in the Poincar\'{e} disk. Original data (blue dots) lie in a 2-dimensional hyperbolic space and have a zero mean (origin of the Poincar\'{e} disk). The HoroPCA direction (red dotted line) and the principal geodesic obtained by tangent PCA (orange dashed line) and Exact PGA (purple dash-dotted line) fail to capture the main trend of the data since they are restricted to learn a geodesic submanifold passing through the mean. In contrast, our nested hyperbolic (NH) representation (green solid line), captures the data trend more accurately. The diamond markers on each line represent the reconstructed data from each method. The reconstruction errors for HoroPCA, tangent PCA, EPGA and the proposed NH scheme in this example are, 0.1708, 0.1202, 0.1638 and 0.0062 respectively.}
   \label{fig:mean0example}
\end{figure}

\subsection{Dimensionality Reduction of Manifold-valued Data} 

Dimensionality reduction is a fundamental problem in machine learning with applications in computer vision and many other fields of engineering and sciences. The simplest and most popular method among these is the principal component analysis (PCA), which was proposed more than a century ago (see \cite{jolliffe2016principal} for a review and some recent developments on PCA). PCA however is limited to data in vector spaces. For data that are manifold-valued, principal geodesic analysis (PGA) was presented in \cite{fletcher2004principal}, which yields the projection of data onto principal geodesic submanifolds passing through an intrinsic (Fr\'{e}chet) mean \cite{Frechet48} of the data. They find the geodesic submanifold of a lower dimension that maximizes the projected variance and computationally, this was achieved via linear approximation, i.e., applying PCA on the tangent space anchored at the Fr\'{e}chet mean. This is sometimes referred to as the tangent PCA (tPCA). This approximation however requires the data to be clustered around the Fr\'{e}chet mean, otherwise the tangent space approximation to the manifold leads to inaccuracies. Subsequently, \cite{sommer2010manifold} presented the Exact PGA (EPGA) algorithm, which does not use any linear approximation. However, EPGA is computationally expensive as it requires two non-linear optimizations steps per iteration (projection to the geodesic submanifold and finding the new geodesic direction such that the reconstruction error is minimized). Later, authors in \cite{chakraborty2016efficient} developed a version of EPGA for constant sectional curvature manifolds, namely the hypersphere and the hyperbolic space, by deriving closed form formulae for the projection. There are many variants of PGA and we refer the reader to \cite{banerjee2017sparse,huckemann2010intrinsic,zhang2013probabilistic} for the details. More recently, Barycentric subspace analysis (BSA)  was proposed in \cite{pennec2018barycentric} which finds a more general parameterization  of a nested sequence of submanifolds via the minimization of unexplained variance. Another useful dimensionality reduction scheme is the Principal curves \cite{hastie1989principal} and their generalization to Riemannian manifolds \cite{hauberg2015principal} that are more appropriate for certain applications.
%Other variants of PGA include but not limited to sparse exact PGA \cite{banerjee2017sparse}, geodesic PCA \cite{huckemann2010intrinsic}, and probabilistic PGA \cite{zhang2013probabilistic}. All the above methods focus on projecting data to a \emph{geodesic submanifold} as in PCA where one projects data to a vector subspace. Instead, one can also project data to a submanifold that minimizes the reconstruction error without any further restrictions, e.g.\ being geodesic. This is the generalization of the principal curve \cite{hastie1989principal} to Riemannian manifolds presented in \cite{hauberg2016principal}. 

A salient feature of PCA is that it yields nested linear subspaces, i.e., the reduced dimensional principal subspaces form a nested hierarchy. This idea was exploited in \cite{jung2012analysis} where authors proposed the \emph{principal nested spheres} (PNS) by embedding an $(n-1)$-sphere in to an $n$-sphere, the embedding however is not necessarily isometric. Hence, PNS is more general than PGA in that PNS does not have to be geodesic. Similarly, for the manifold $P_n$ of $(n \times n)$ symmetric positive definite (SPD) matrices, authors in \cite{harandi2018dimensionality} proposed a geometry-aware dimensionality reduction by projecting data on $P_n$ to $P_m$ for some $m \ll n$. More recently, the idea of constructing a nested sequence of manifolds was presented in \cite{yang2021nested} where authors unified and generalized the nesting concept to general Riemannian homogeneous manifolds, which form a large class of Riemannian manifolds, including the hypersphere, $P_n$, the Grassmannian, Stiefel manifold, Lie groups, and others. Although the general framework in \cite{yang2021nested} seems straightforward and applicable to hyperbolic spaces, many significantly important  technical aspects need to be addressed and derived in detail. In this paper, we will present novel derivations suited for the hyperbolic spaces -- a projection operator which is proved to yield an isometric embedding, and a proof of equivariance to isometries of the projection operator --  which will facilitate the construction of nested hyperbolic spaces and the hyperbolic neural network. Note that there are five models of the hyperbolic space namely, the hyperboloid (Lorentz) model, the Poincar\'{e} disk/ball model, the Poincar\'{e} half plane model, the Klein model and the Jemisphere model \cite{JamesCannon1997}. All these models are isometrically equivalent but some are better suited than others depending on the application. We choose the Lorentz model of the hyperbolic space with a Lorentzian metric in our work. The choice of this model and the associated metric over other models is motivated by the properties of Riemannian optimization efficiency and numerical stability afforded \cite{nickel2018learning,chen2021fully}.

%A nested sequence of relations that determine a nested sequence of submanifolds which are not necessarily geodesic were presented in \cite{damon2014backwards}. They showed various examples, including Euclidean space and the $n$-sphere, depicting how the nested relations generalized PCA and PNS. It is however unclear how to achieve this for an arbitrary Riemannian manifold. More recently, another generalization of PGA was proposed by \cite{pennec2018barycentric}, called the exponential barycentric subspace (EBS). A $k$-dimensional EBS is defined as the locus of weighted exponential barycenters of $(k + 1)$ affinely independent reference points. The EBSs are naturally nested by removing or adding reference points. None of the above methods are directly applicable to the Hyperbolic space excepting the PGA, EPGA and EBS. 

Most recently, an elegant approach called \emph{HoroPCA} was proposed in \cite{chami2021horopca}, for dimensionality reduction in hyperbolic spaces. In particular, the authors represented the hyperbolic space using the Poincar\'{e} model and they proposed to generalize the notion of direction and the coordinates in a given direction using \emph{ideal points} (points at infinity) and the \emph{Busemann coordinates} (defined using the \emph{Busemann function}) \cite{busemann1955geometry}. The levels sets of the Busemann function, called the \emph{horospheres}, resemble the hyperplanes (or affine subspaces) in Euclidean spaces and hence the dimensionality reduction is achieved by a projection that moves points along a horosphere. The data is then projected to a \emph{geodesic hull of a base point $b$ and a number of ideal points $p_1,\ldots, p_K$}, {\it which is also a geodesic submanifold}. \emph{This is the key difference between HoroPCA and our proposed method which leads to a significant difference in performance.} This is evident from the toy example in Figure~\ref{fig:mean0example} which depicts the reduced dimensional representations obtained by our method in comparison to those from EPGA, HoroPCA, and tangent PCA. Note that all of the other methods yield submanifold representations that do not capture the data trend accurately, unlike ours. More comprehensive comparisons will be made in a later section.

To briefly summarize, our first goal in this paper is to present a nested hyperbolic space representation for dimensionality reduction 
%following the approach proposed by \cite{yang2021nested}, 
and we will demonstrate, via synthetic examples and real datasets, that it achieves a lower reconstruction error in comparison to other competing methods. 

\subsection{Hyperbolic Neural Networks}

Several researchers have demonstrated that the hyperbolic space is apt for modeling hierarchically organized data, for example, graphs and trees \cite{sarkar2011low,sala2018representation,nickel2017poincare}. Recently, the formalism of Gyrovector spaces (an algebraic structure) \cite{ungar2005gyrovector} was applied to the hyperbolic space to define basic operations paralleling those in vector spaces and were used to build a hyperbolic neural network (HNN) \cite{ganea2019hyperbolic,shimizu2020hyperbolic}. The Gyrovector space formalism facilitates performing M\"{o}bius additions and subtractions in the Poincare model of the hyperbolic space.
% \textcolor{red}{Xiran, i tried modifying the following sentences but i really don't understand what you are trying to convey and hence couldn't do so properly. I modified it by guessing what you wanted to say and I am not sure.  This needs checking. Don't forget.}
HNNs have been successfully applied to word embeddings \cite{tifrea2018poincare} as well as image embeddings \cite{khrulkov2020hyperbolic}. Additionally, several existing deep network architectures have been modified to suit hyperbolic embeddings of data, e.g.,
graph networks \cite{liu2019hyperbolic,chami2019hyperbolic}, attention module \cite{gulcehre2018hyperbolic}, and variational auto-encoders \cite{mathieu2019charline,park2021unsupervised}. These hyperbolic networks were shown to perform comparably or even better than their euclidean counterparts.
%have been adopted hyperbolic structure, such as graph networks \cite{liu2019hyperbolic,chami2019hyperbolic}, attention module \cite{gulcehre2018hyperbolic}, and variational auto-encoders \cite{mathieu2019charline}. These hyperbolic networks are shown to perform comparably or even better than euclidean deep networks.

Existing HNNs have achieved moderate to great successes in multiple areas and shown great potential in solving complex problems. However, most of them use tangent space approximations to facilitate the use of vector space operations prevalent in existing neural network architectures. There are however some exceptions, for instance, the authors in \cite{dai2021hyperbolic} developed what they call a Hyperbolic-to-Hyperbolic network and the authors in \cite{chen2021fully} also developed a fully Hyperbolic network. They both considered the use of Lorentz transformations on hyperbolic features since the Lorentz transformation matrix acts transitively on a hyperbolic space and thus preserves the global hyperbolic structure. Each Lorentz transformation is a composition of a Lorentz rotation and a rotation free Lorentz transformation called the Lorentz boost operation. Authors in \cite{dai2021hyperbolic} only use Lorentz rotation for hyperbolic feature transformations while authors in \cite{chen2021fully} build a fully-connected layer in hyperbolic space (called a hyperbolic linear layer) parameterized by an arbitrary weight matrix (not necessarily invertible) which is applied to each data point in the hyperbolic space resulting in a mapping from a hyperbolic space to itself. This procedure is ad hoc in the sense that it does not use the intrinsic characterization of the hyperbolic space as a homogeneous space with the isometry group being the Lorentz group.

Lorentz transformations are however inappropriate for defining projection operations (required for reducing the dimensionality) as they preserve the Lorentz model only \emph{when there is no change in dimension}. In other words, to find a lower-dimensional hyperbolic space representation for data embedded in a higher-dimensional hyperbolic space, one cannot use Lorentz transformations directly. Hence, we propose to use an isometric embedding operation mentioned in the previous subsection as the building block to design a hyperbolic neural network. We will now briefly summarize our proposed model and the contributions of our work. 

\subsection{Proposed Model and Contributions}
%{\color{red} Now you start describing the proposed work briefly and then list out the key contributions of the paper. Followed by a paragraph describing the organization of the paper.}

Inspired by \cite{jung2012analysis} and \cite{yang2021nested}, we construct a nested representation in a hyperbolic space to extract the hyperbolic features. Such a nested (hierarchical) hyperbolic space representation has the advantage that the data in reduced dimensions remains in a hyperbolic space.  Hereafter, we refer to these nested hyperbolic spaces as nested hyperboloids (NHs).
%Authors in \cite{jung2012analysis} stated the fact that a low dimensional unit sphere can be embedded in a high dimensional unit sphere by using a nested representation scheme. Furthermore, authors in \cite{yang2021nested} recently showed that all the Riemannian homogeneous spaces enjoy such nested structures. Hyperbolic model, as a Riemannian homogeneous space, also admits such nested representation.The nested hyperbolic space representation first leads to a dimensionality reduction method in a hyperbolic space, hereafter referred to as Nested Hyperboloid (NH). 
As a dimensionality reduction method in Riemannian manifolds, the learned lower dimensional submanifold in NH is not required to pass the Fr\'{e}chet mean unlike in PGA and need not be a geodesic submanifold as in HoroPCA, PGA or EPGA. In the experiments section, we will demonstrate that this leads to much lower reconstruction error in comparison to the aforementioned dimensionality reduction methods.

After defining the projection which leads to an embedding within hyperbolic spaces of different dimensions, these projections/embeddings are used to define a feature transformation layer in the hyperbolic space. This layer is then composed with a hyperbolic neighborhood aggregation operation/layer and appropriate non-linear operations in between namely, the tangent-ReLU, to define a novel nested hyperbolic graph convolutional network (NHGCN) architecture. 

%Our model uses the equivalent of linear decision boundaries in hyperbolic spaces called horrocycles to achieve classification of data in the hyperbolic space.\textcolor{red}{Xiran, is what I said here about the classification correct?}
% {\color{red} Some details need to be filled in after the network experiments are done.}

The rest of the paper is organized as follows. In Section~\ref{sec:prelim}, we briefly review the geometry of hyperbolic space.
%and, in particular, we will discuss how the hyperbolic space can fit into the framework proposed by \cite{yang2021nested}. 
In Section~\ref{sec:mainwork}, we explicitly give the projection and embedding to map data between hyperbolic spaces of different dimensions. We also present a novel hyperbolic graph convolutional neural network architecture based on these projections and tangent-ReLU activation. In Section~\ref{sec:experiment}, we first present the performance of NH as a dimensionality reduction method and compare with other competing methods, including EPGA, tangent-PCA and HoroPCA. Next, we compare our NHGCN with other hyperbolic networks on the problems of link prediction and node classification on four graph datasets described and used  in \cite{chami2019hyperbolic}.
%{\color{red} XXX datasets (details regarding the dataset need to be filled in)}. 
Finally, we draw conclusions in Section~\ref{sec:conclusion}.

\section{Preliminaries}\label{sec:prelim}
In this section, we briefly review relevant concepts of hyperbolic geometry. In this paper, we will regard the hyperbolic space as a homogeneous Riemannian manifold of the Lorentz group and present a few important geometric concepts, including the geodesic distance and the exponential map, in the hyperbolic space, which are used in our work. The materials presented in this section can be found in most textbooks on hyperbolic spaces, for example \cite{ratcliffe2006foundations, cannon1997hyperbolic}.

\subsection{Lorentzian Space and Hyperbolic Space}\label{sec:lorentz_space}

\begin{figure} \centering
    \begin{subfigure}[b]{0.475\linewidth}
        \includegraphics[width=0.8\linewidth, trim=100 100 100 100,clip]{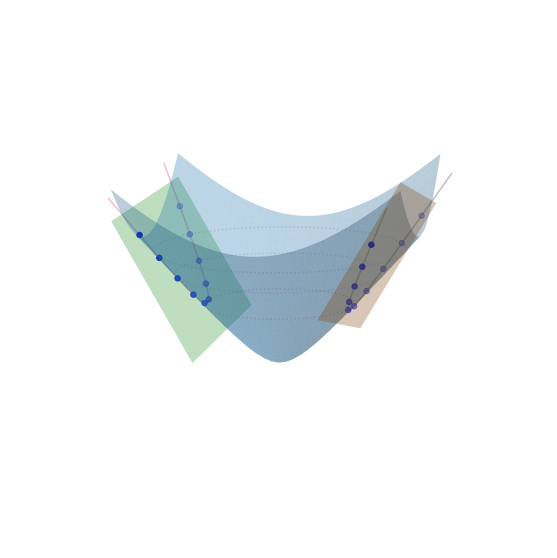}
        \caption{Lorentz rotation}\label{fig:Lorentz_rotation}
    \end{subfigure} 
    \begin{subfigure}[b]{0.475\linewidth}    
        \includegraphics[width=0.8\linewidth, trim=100 100 100 100,clip]{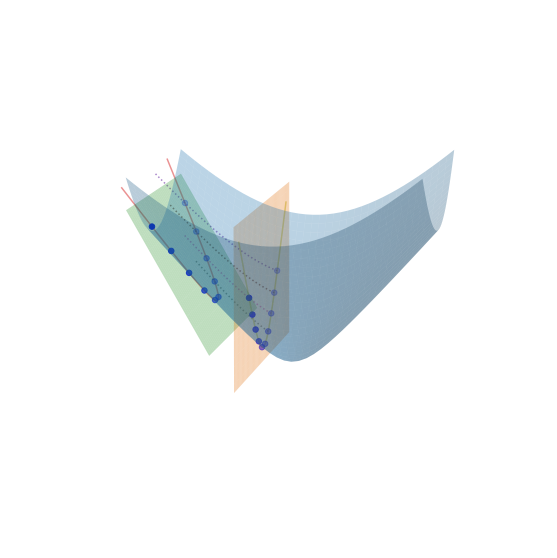}
        \caption{Lorentz boost}\label{fig:Lorentz_boost}    
    \end{subfigure} 
    \caption{Illustration of two kinds of Lorentz transformation, Lorentz rotation and Lorentz boost in a Lorentz model. They are isometric operations of the Lorentz model.}
    \label{fig:lorentz_transformation}
\end{figure}

As mentioned in Section \ref{sec:intro},  there are several (isometrically) equivalent models of a hyperbolic space, including the Poincar\'{e} model, Klein model, the upper-half space model, and the Jemisphere model \cite{JamesCannon1997}.  We choose to use the hyperboloid (Lorentz) model of the hyperbolic space in this paper due to its numerical stability property which is very useful for the optimization problem involved in the training and test phases. Our technique is however applicable to all of the models due to the isometric equivalence of the models.

The $(n+1)$-dimensional \emph{Lorentzian space} $\mathbb{R}^{1,n}$ is the Euclidean space $\mathbb{R}^{n+1}$ equipped with a bilinear form
\begin{equation*}\label{eq:bilinearform}
    \langle \boldsymbol{x},\boldsymbol{y}\rangle_{L} =-x_0y_0+x_1y_1 + \cdots +x_ny_n 
\end{equation*}
where $\boldsymbol{x} = [x_0,x_1,\ldots,x_{n}]^T, \boldsymbol{y} = [y_0,y_1,\ldots,y_{n}]^T \in \mathbb{R}^{n+1}$. This bilinear form is sometimes referred to as the Lorentzian inner product although it is not positive-definite. We denote the norm, called \emph{Lorentzian norm}, induced by the Lorentzian inner product by $\|\boldsymbol{x}\|_L = \sqrt{\langle \boldsymbol{x}, \boldsymbol{x} \rangle_L}$. Note that $\|\boldsymbol{x}\|_L$ is either positive, zero, or positive imaginary.

We consider the following submanifold of $\mathbb{R}^{1,n}$
\begin{equation*}\label{eq:hyperboloid}
    \mathbb{L}^n : = \{\boldsymbol{x} = [x_0,\ldots,x_{n}]^T\in \mathbb{R}^{n+1}: \|\boldsymbol{x}\|_L^2=-1,x_0 > 0\}
\end{equation*}
This is called the $n$-dimensional \emph{hyperboloid model} of one sheet of a hyperbolic space defined in $\mathbb{R}^{n+1}$.

% \textcolor{blue}{
% The isometric relationship between hyperboloid model and Poincar\'{e} model is given by:
% \begin{equation*}
%     x = [x_0,\ldots,x_{n}]^T\in \mathbb{H}^{n} \Leftrightarrow \left[ \frac{x_1}{1+x_0},\ldots \frac{x_n}{1+x_0}\right]\in \mathbb{B}^{n} 
% \end{equation*}
% The isometric relationship between hyperboloid model and Klein model is given by:
% \begin{equation*}
%     x = [x_0,\ldots,x_{n}]^T\in \mathbb{H}^{n} \Leftrightarrow \left[ \frac{x_1}{x_0},\ldots \frac{x_n}{x_0}\right]\in \mathbb{K}^{n} 
% \end{equation*}
% }
\subsection{Lorentz Transformations}
In the Lorentzian space, the linear isometries are called the \emph{Lorentz transformation}, i.e.\ the map $\phi:\mathbb{R}^{n+1} \to \mathbb{R}^{n+1}$ is a Lorentz transformation if $\langle \phi(x), \phi(y) \rangle_L = \langle x, y\rangle_L$ for any $x, y \in \mathbb{R}^{n+1}$. It is easy to see that all Lorentz transformations form a group under composition, and this group is denoted by $\mathbf{O}(1,n)$, called the \emph{Lorentz group}. The matrix representation of $\mathbf{O}(1,n)$ in $\mathbb{R}^{n+1}$ is defined as follows. Let $J_n = \text{diag}(-1,I_n)$ where $I_n$ is the $n\times n$ identity matrix and $\text{diag}(\cdot)$ denotes a diagonal matrix. Then, $\mathbf{O}(1,n)$ is defined as $\mathbf{O}(1,n) \coloneqq \{A\in M_{n+1}(\mathbb{R}): AJ_nA^T = A^TJ_nA = J_n\}$. There are a few important subgroups of $\mathbf{O}(1,n)$: (i) the subgroup $\mathbf{O}^+(1,n) \coloneqq \{A \in \mathbf{O}(1, n): a_{11} > 0\}$ is called the \emph{positive Lorentz group}; (ii) the subgroup $\mathbf{SO}(1,n) \coloneqq \{A \in \mathbf{O}(1,n): \det(A) = 1\}$ is called the \emph{special Lorentz group}; (iii) the subgroup $\mathbf{SO}^+(1,n) \coloneqq \{A \in \mathbf{SO}(1,n): a_{11} > 0\}$ is called the \emph{positive special Lorentz group}. Briefly speaking, the special Lorentz group preserves the orientation, and the positive Lorentz group preserves the sign of the first entry of $\boldsymbol{x} \in \mathbb{L}^n$.

\subsection{Riemannian Geometry of Hyperbolic Space}

A commonly used Riemannian metric for $\mathbb{L}^n \subset \mathbb{R}^{n+1}$ is the restriction of the Lorentz inner product to the tangent space of $\mathbb{L}^n$. \emph{Note that even though the Lorentz inner product is not positive-definite, when restricted to the tangent space of $\mathbb{L}^n$, it is positive-definite. Hence, $\mathbb{L}^n$ is a Riemannian manifold with constant negative sectional curvature}. Furthermore, the group of isometries  of $\mathbb{L}^n$ is precisely $\mathbf{O}^+(1,n)$ and the group of orientation-preserving isometries is $\mathbf{SO}^+(1,n)$. We now state a few useful facts about the group of isometries that are used in this paper and refer the interested reader to \cite{gallier2012notes} for details.

\begin{fact}
The positive special Lorentz group $\mathbf{SO}^+(1,n)$ acts transitively on $\mathbb{L}^n$ where the group action is defined as $x \mapsto Ax$ for $x \in \mathbb{L}^n$ and $A \in \mathbf{SO}^+(1,n)$.
\end{fact}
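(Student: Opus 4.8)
The plan is to reduce transitivity to a single orbit computation. Fix the base point $\boldsymbol{o}=[1,0,\ldots,0]^T\in\mathbb{L}^n$. Since $\mathbf{SO}^+(1,n)$ is a group acting on $\mathbb{L}^n$, it is enough to produce, for each $\boldsymbol{x}\in\mathbb{L}^n$, a matrix $A_{\boldsymbol{x}}\in\mathbf{SO}^+(1,n)$ with $A_{\boldsymbol{x}}\boldsymbol{o}=\boldsymbol{x}$: then for arbitrary $\boldsymbol{x},\boldsymbol{y}\in\mathbb{L}^n$ the element $A_{\boldsymbol{y}}A_{\boldsymbol{x}}^{-1}$ lies in $\mathbf{SO}^+(1,n)$ and maps $\boldsymbol{x}$ to $\boldsymbol{y}$. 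Before that I would check that the action is well-defined, i.e.\ $A\boldsymbol{x}\in\mathbb{L}^n$ whenever $A\in\mathbf{SO}^+(1,n)$ and $\boldsymbol{x}\in\mathbb{L}^n$. Norm preservation is immediate from the defining relation of $\mathbf{O}(1,n)$: $\langle A\boldsymbol{x},A\boldsymbol{x}\rangle_L=\boldsymbol{x}^TA^TJ_nA\boldsymbol{x}=\boldsymbol{x}^TJ_n\boldsymbol{x}=-1$. For the sign of the zeroth coordinate — the one place where the condition $a_{11}>0$ enters — write $A$ in block form $A=\begin{pmatrix}a&\boldsymbol{b}^T\\\boldsymbol{c}&D\end{pmatrix}$ with $a=a_{11}>0$; the $(1,1)$ entry of $A^TJ_nA=J_n$ yields $\|\boldsymbol{c}\|^2=a^2-1$, and $\|\boldsymbol{x}\|_L^2=-1$ yields $\|\boldsymbol{v}\|^2=x_0^2-1$ for the spatial part $\boldsymbol{v}=[x_1,\ldots,x_n]^T$. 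Then by Cauchy--Schwarz $(A\boldsymbol{x})_0=ax_0+\boldsymbol{c}^T\boldsymbol{v}\ge ax_0-\sqrt{(a^2-1)(x_0^2-1)}>0$, since $(a^2-1)(x_0^2-1)<a^2x_0^2$ (equivalently $a^2+x_0^2>1$, which holds as $a\ge1$ and $x_0\ge1$). Hence $A\boldsymbol{x}\in\mathbb{L}^n$, and in particular $A_{\boldsymbol{y}}A_{\boldsymbol{x}}^{-1}$ above is a genuine element of the action.

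For the existence of $A_{\boldsymbol{x}}$: if $\boldsymbol{v}=\boldsymbol{0}$ then $\boldsymbol{x}=\boldsymbol{o}$ and $A_{\boldsymbol{x}}=I_{n+1}$ works; otherwise I would take the Lorentz boost
\begin{equation*}
A_{\boldsymbol{x}}=\begin{pmatrix}x_0&\boldsymbol{v}^T\\[2pt]\boldsymbol{v}&I_n+\dfrac{1}{1+x_0}\,\boldsymbol{v}\boldsymbol{v}^T\end{pmatrix}.
\end{equation*}
Its first column is $\boldsymbol{x}$, so $A_{\boldsymbol{x}}\boldsymbol{o}=\boldsymbol{x}$, and $a_{11}=x_0>0$. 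What remains is to verify $A_{\boldsymbol{x}}^TJ_nA_{\boldsymbol{x}}=J_n$ and $\det A_{\boldsymbol{x}}=1$. The first is a direct block multiplication: using $\|\boldsymbol{v}\|^2=(x_0-1)(x_0+1)$, the two off-diagonal blocks collapse to $\boldsymbol{0}$ and the lower-right block to $I_n$, the cancellations being forced precisely by the coefficient $1/(1+x_0)$. The determinant equals $1$ by a short block-matrix computation (the Schur complement of the lower-right block in $A_{\boldsymbol{x}}$ is $1/x_0$ while that block has determinant $x_0$), or by observing that $A_{\boldsymbol{x}}$ lies on a one-parameter family of boosts joining it to $I_{n+1}$, along which neither the determinant nor the sign of $a_{11}$ can jump. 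Either way $A_{\boldsymbol{x}}\in\mathbf{SO}^+(1,n)$, completing the argument.

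I expect the only step needing real care to be the identity $A_{\boldsymbol{x}}^TJ_nA_{\boldsymbol{x}}=J_n$, i.e.\ confirming that the explicit boost is genuinely a Lorentz transformation; it hinges entirely on the algebraic identity $\|\boldsymbol{v}\|^2=x_0^2-1$ and is the computational heart of the proof, everything else being routine bookkeeping. An alternative that sidesteps a closed form is to invoke the rotation--boost--rotation ($KAK$-type) decomposition of $\mathbf{SO}^+(1,n)$ and argue that a boost of suitable rapidity along the $x_1$-axis followed by a spatial rotation carries $\boldsymbol{o}$ to $\boldsymbol{x}$; but the explicit boost above is self-contained and is moreover the map we reuse when constructing the isometric embedding later in the paper.
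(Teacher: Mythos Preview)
The paper does not actually prove this statement: it is listed among several ``useful facts'' about the isometry group, with the reader referred to an external reference for details. Your argument is therefore not competing with any proof in the paper, and it is correct and self-contained. One small slip: with your block notation $A=\begin{pmatrix}a&\boldsymbol{b}^T\\\boldsymbol{c}&D\end{pmatrix}$ the zeroth coordinate of $A\boldsymbol{x}$ is $ax_0+\boldsymbol{b}^T\boldsymbol{v}$, not $ax_0+\boldsymbol{c}^T\boldsymbol{v}$; the Cauchy--Schwarz bound survives unchanged because the $(1,1)$ entry of the companion relation $AJ_nA^T=J_n$ gives $\|\boldsymbol{b}\|^2=a^2-1$ just as $A^TJ_nA=J_n$ gave $\|\boldsymbol{c}\|^2=a^2-1$. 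It is also worth noting that your explicit boost $A_{\boldsymbol{x}}$ is exactly the second factor in the polar decomposition recorded as Fact~3 (indeed $I_n+\tfrac{1}{1+x_0}\boldsymbol{v}\boldsymbol{v}^T=\sqrt{I_n+\boldsymbol{v}\boldsymbol{v}^T}$ when $x_0=\sqrt{1+\|\boldsymbol{v}\|^2}$), so your construction dovetails with the surrounding material even though the paper itself leaves the transitivity claim unproved.
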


\begin{fact}
Let $\boldsymbol{x} = [1,0,\ldots,0]^T \in \mathbb{L}^n$. The isotropy subgroup $G_{\boldsymbol{x}}$ is given by
\begin{align*}\label{eq:ifotropy_subgroup}
G_x & \coloneqq \{A \in \mathbf{SO}^+(1,n): A\boldsymbol{x} = \boldsymbol{x}\}\\
& = \left\{\left[ \begin{array}{cc}
1 & 0 \\
0 & R
\end{array}\right]: R \in \mathbf{SO}(n)\right\} \cong \mathbf{SO}(n)
\end{align*}
where $\mathbf{SO}(n)$ is the group of $n \times n$ orthogonal matrices with determinant 1.
\end{fact}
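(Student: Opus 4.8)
The plan is to solve the fixed-point equation $A\boldsymbol{x}=\boldsymbol{x}$ directly, then pin down the remaining entries of $A$ using the defining relation of $\mathbf{O}(1,n)$, and finally check the reverse inclusion and the group isomorphism. Throughout, write $\boldsymbol{e}_0 = \boldsymbol{x} = [1,0,\ldots,0]^T$ and recall $J_n = \text{diag}(-1,I_n)$, so that $J_n^2 = I_{n+1}$.

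First I would note that $A\boldsymbol{e}_0 = \boldsymbol{e}_0$ says exactly that the first column of $A$ is $\boldsymbol{e}_0$, so every $A\in G_{\boldsymbol{x}}$ has the block form $A = \left[\begin{smallmatrix} 1 & \boldsymbol{b}^T \\ \boldsymbol{0} & R\end{smallmatrix}\right]$ for some $\boldsymbol{b}\in\mathbb{R}^n$ and $R\in M_n(\mathbb{R})$. A short block computation then gives
\[
A^T J_n A = \begin{bmatrix} -1 & -\boldsymbol{b}^T \\ -\boldsymbol{b} & R^TR - \boldsymbol{b}\boldsymbol{b}^T \end{bmatrix},
\]
and equating this with $J_n$ forces $\boldsymbol{b}=\boldsymbol{0}$ and then $R^TR = I_n$, i.e.\ $R\in\mathbf{O}(n)$. (Only the relation $A^TJ_nA = J_n$ is used here; the other relation $AJ_nA^T = J_n$ in the definition of $\mathbf{O}(1,n)$ follows automatically, since $A^TJ_nA = J_n$ already forces $\det A = \pm1$, hence $A$ invertible, and then $A^{-1} = J_nA^TJ_n$.) With $\boldsymbol{b}=\boldsymbol{0}$ the matrix is block-diagonal, so $\det A = \det R$; as $A\in\mathbf{SO}^+(1,n)$ has determinant $1$ we conclude $\det R = 1$, i.e.\ $R\in\mathbf{SO}(n)$. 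This proves $G_{\boldsymbol{x}}\subseteq\{\text{diag}(1,R): R\in\mathbf{SO}(n)\}$.

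For the reverse inclusion I would simply verify that any $A = \text{diag}(1,R)$ with $R\in\mathbf{SO}(n)$ satisfies $AJ_nA^T = A^TJ_nA = J_n$ (immediate from $R^TR = RR^T = I_n$), has $\det A = 1$ and $a_{11} = 1 > 0$, hence lies in $\mathbf{SO}^+(1,n)$, and obviously fixes $\boldsymbol{e}_0$; so $A\in G_{\boldsymbol{x}}$. Finally, the map $R\mapsto\text{diag}(1,R)$ is a bijection onto $G_{\boldsymbol{x}}$ and is multiplicative by the block-diagonal product rule, hence a group isomorphism $\mathbf{SO}(n)\cong G_{\boldsymbol{x}}$. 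I do not anticipate a genuine obstacle: the argument is elementary block-matrix algebra. The only points needing care are (a) justifying that the single relation $A^TJ_nA = J_n$ suffices (via invertibility and $J_n^2 = I$), and (b) not skipping the $\det = 1$ and $a_{11}>0$ checks, so that the identified matrices truly sit in $\mathbf{SO}^+(1,n)$ rather than merely in $\mathbf{O}(1,n)$.
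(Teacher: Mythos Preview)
Your proof is correct and complete; the block-matrix computation, the determinant argument, and the reverse inclusion are all sound. Note, however, that the paper does not actually give its own proof of this statement: it is labeled a \emph{Fact} and the reader is referred to \cite{gallier2012notes} for details, so there is no in-paper argument to compare your approach against. Your elementary derivation is exactly the standard one and would serve well as a self-contained justification.
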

Hence, the hyperbolic space is a homogeneous Riemannian manifold and can be written as a quotient space, $\mathbb{L}^n = \mathbf{SO}^+(1,n)/\mathbf{SO}(n)$.

\begin{fact}[\cite{moretti2002interplay}]
A Lorentz transformation $A \in \mathbf{SO}^+(1,n)$ can be decomposed using a polar decomposition and expressed as
\begin{equation*}\label{eq:decomposition1}
    A = \left[ \begin{array}{cc}
1 & 0 \\
0 & R
\end{array}\right]
 \left[ \begin{array}{cc}
c & v^T \\
v & \sqrt{I_n+vv^T}
\end{array}\right]
\end{equation*}
where $R \in \mathbf{SO}(n)$, $v \in \mathbb{R}^n$  and $c = \sqrt{\lVert v\rVert+1}$.
\end{fact}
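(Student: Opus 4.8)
The plan is to derive the stated identity as a concrete instance of the Cartan (polar) decomposition of $\mathbf{SO}^+(1,n)$, using Facts 1 and 2 rather than invoking the abstract polar decomposition theorem. Write $e_0 = [1,0,\ldots,0]^T$, and for $v \in \mathbb{R}^n$ set
\[
B_v := \begin{bmatrix} \sqrt{1+\lVert v\rVert^2} & v^T \\ v & \sqrt{I_n + vv^T}\end{bmatrix},
\]
where $\sqrt{I_n+vv^T}$ is the unique symmetric positive-definite square root of $I_n+vv^T$; I will call the $B_v$ \emph{boosts}. The argument splits into showing that every boost lies in $\mathbf{SO}^+(1,n)$ and parametrizes the orbit of $e_0$, and then assembling $A$ from a boost and a rotation.

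For the first part, I would first record the two algebraic identities $\sqrt{I_n+vv^T}\,v = \sqrt{1+\lVert v\rVert^2}\,v$ and $(\sqrt{I_n+vv^T})^2 = I_n+vv^T$; the former holds because $I_n+vv^T$ has $v$ as an eigenvector with eigenvalue $1+\lVert v\rVert^2$ and is the identity on $v^\perp$, so its positive-definite square root has $v$ as an eigenvector with eigenvalue $\sqrt{1+\lVert v\rVert^2}$ (explicitly, $\sqrt{I_n+vv^T} = I_n + \tfrac{\sqrt{1+\lVert v\rVert^2}-1}{\lVert v\rVert^2}vv^T$ for $v \neq 0$). Since $B_v$ is symmetric, a single block multiplication of $B_v J_n B_v$ using these identities together with $(\sqrt{1+\lVert v\rVert^2})^2 - \lVert v\rVert^2 = 1$ gives $B_v J_n B_v^T = B_v J_n B_v = J_n$, so $B_v \in \mathbf{O}(1,n)$; its $(1,1)$ entry is positive, and taking determinants in $B_v J_n B_v = J_n$ gives $\det B_v = \pm 1$, which must be $+1$ because $v \mapsto \det B_v$ is continuous on the connected set $\mathbb{R}^n$ and equals $1$ at $v = 0$. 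Hence $B_v \in \mathbf{SO}^+(1,n)$. Moreover $B_v e_0 = [\sqrt{1+\lVert v\rVert^2},\,v^T]^T$, so $v \mapsto B_v e_0$ is a bijection $\mathbb{R}^n \to \mathbb{L}^n$, with inverse sending $[y_0,y_1,\ldots,y_n]^T$ to $[y_1,\ldots,y_n]^T$.

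For the assembly, given $A \in \mathbf{SO}^+(1,n)$ I would work with $A^T$, which is again in $\mathbf{SO}^+(1,n)$ (transposing $A J_n A^T = A^T J_n A = J_n$ shows $A^T \in \mathbf{O}(1,n)$, while $\det A^T = \det A = 1$ and $(A^T)_{11} = A_{11} > 0$). By Fact 1 the group acts on $\mathbb{L}^n$, so $A^T e_0 \in \mathbb{L}^n$, and by the previous paragraph there is a unique $w \in \mathbb{R}^n$ with $B_w e_0 = A^T e_0$. Then $B_w^{-1} A^T \in \mathbf{SO}^+(1,n)$ fixes $e_0$, so by Fact 2 it equals $\begin{bmatrix} 1 & 0 \\ 0 & S\end{bmatrix}$ for some $S \in \mathbf{SO}(n)$; transposing $A^T = B_w\begin{bmatrix} 1 & 0 \\ 0 & S\end{bmatrix}$ and using $B_w^T = B_w$ gives $A = \begin{bmatrix} 1 & 0 \\ 0 & S^T\end{bmatrix} B_w$, which is the asserted form with $R = S^T \in \mathbf{SO}(n)$, $v = w$, and $c = \sqrt{1+\lVert v\rVert^2}$.

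I expect the two places needing care to be: (a) getting the factor order right — running the orbit argument on $A$ itself produces $A = (\text{boost})(\text{rotation})$, so I would apply it to $A^T$ (equivalently to $A^{-1}$, using the easily checked identity $B_v^{-1} = B_{-v}$) in order to land on $(\text{rotation})(\text{boost})$ as stated; and (b) the identity $\sqrt{I_n+vv^T}\,v = \sqrt{1+\lVert v\rVert^2}\,v$, which is precisely what makes the block computation $B_v J_n B_v = J_n$ close up and is the one nonroutine step in verifying $B_v \in \mathbf{O}(1,n)$. A more standard alternative would be to start from the polar decomposition $A = QP$ ($Q$ orthogonal, $P$ symmetric positive-definite) and argue both factors remain in $\mathbf{O}(1,n)$; there the obstacle is showing $\mathbf{O}(1,n)$ is closed under $A \mapsto (A^TA)^t$ for real $t$, which follows from $SJ_nS = J_n \Rightarrow J_n S J_n = S^{-1}$ together with functional calculus — but the route through Facts 1 and 2 is cleaner and is the one I would write out.
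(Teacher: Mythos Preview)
Your proof is correct. The paper does not actually prove this fact: it is stated with a citation to \cite{moretti2002interplay} and no argument is given, so there is no paper proof to compare against. Your route via Facts~1 and~2 --- parametrize the orbit of $e_0$ by boosts, then peel off a boost from $A^T$ to land in the isotropy subgroup --- is a clean, self-contained derivation, and the block computation $B_v J_n B_v = J_n$ closes up exactly as you say once $\sqrt{I_n+vv^T}\,v = \sqrt{1+\lVert v\rVert^2}\,v$ is in hand. Your care about factor order (working with $A^T$ to get rotation--then--boost rather than boost--then--rotation) is warranted and handled correctly.

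One minor note: the paper's statement has $c = \sqrt{\lVert v\rVert + 1}$, which is a typo; you correctly use $c = \sqrt{1+\lVert v\rVert^2}$, as required for the $(1,1)$ entry of $B_v J_n B_v$ to equal $-1$.
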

The first component is called a \emph{Lorentz rotation} and the second component is called a \emph{Lorentz boost}. 

\begin{fact}
Every Lorentz transformation matrix $A \in \mathbf{SO}^+(1,n)$ can be decomposed into
\begin{equation}\label{eq:decomposition2}
    A = \left[ \begin{array}{cc}
1 & 0 \\
0 & P
\end{array}\right]
 \left[ \begin{array}{ccc}
\cosh{\alpha} & \sinh{\alpha} & 0^T\\
\sinh{\alpha} & \cosh{\alpha} & 0^T\\
0& 0 &I_{n-1}
\end{array}\right]
\left[ \begin{array}{cc}
1 & 0 \\
0 & Q^T
\end{array}\right]
\end{equation}
where $P,Q \in \mathbf{SO}(n)$,  $\alpha \in \mathbb{R}$ and $0 \in \mathbb{R}^{n-1}$. See Figure~\ref{fig:lorentz_transformation} for examples of the Lorentz rotations and the Lorentz boosts.
\end{fact}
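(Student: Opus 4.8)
The plan is to derive \eqref{eq:decomposition2} from the polar decomposition in Fact 3 by diagonalizing the Lorentz boost factor so that it only acts in one $2$-plane. Starting from Fact 3, write $A = \mathrm{diag}(1,R)\,B$ with $R \in \mathbf{SO}(n)$, where $B$ is the Lorentz boost associated to some $v \in \mathbb{R}^n$: its $(1,1)$ entry is $c$ with $c^2 = 1 + \lVert v\rVert^2$, its first row and column off-diagonal blocks are $v^T$ and $v$, and its lower-right $n\times n$ block is $\sqrt{I_n + vv^T}$ (the positive-definite square root). If $v = 0$ then $B = I_{n+1}$ and \eqref{eq:decomposition2} holds with $\alpha = 0$, $P = R$, $Q = I_n$; and if $n = 1$ then $\mathbf{SO}(1) = \{1\}$ and $B$ already has the middle form of \eqref{eq:decomposition2} with $\sinh\alpha = v$. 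So assume $v \neq 0$ and $n \geq 2$.

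Next I would build the spatial rotation aligning $v$ with a coordinate axis. Let $e_1 = [1,0,\ldots,0]^T \in \mathbb{R}^n$ and extend the unit vector $v/\lVert v\rVert$ to an orthonormal basis of $\mathbb{R}^n$; using these as columns and, if needed, flipping the sign of one basis vector orthogonal to $v$ (possible since $n \geq 2$), we obtain $S \in \mathbf{SO}(n)$ with $S e_1 = v/\lVert v\rVert$. Define $\alpha > 0$ by $\cosh\alpha = c$ and $\sinh\alpha = \lVert v\rVert$, which is consistent since $c^2 - \lVert v\rVert^2 = 1$.

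The crux is the identity $B = \mathrm{diag}(1,S)\,B_\alpha\,\mathrm{diag}(1,S^T)$, where $B_\alpha$ is the middle matrix of \eqref{eq:decomposition2}. Expanding the right-hand side blockwise: the $(1,1)$ entry is $\cosh\alpha = c$; the off-diagonal first row and column are $\sinh\alpha\,(Se_1)^T = v^T$ and $\sinh\alpha\,Se_1 = v$; and the lower-right block is $S\big(I_n + (\cosh\alpha - 1)e_1 e_1^T\big)S^T = I_n + (c-1)\,vv^T/\lVert v\rVert^2$. Thus the claim reduces to $\sqrt{I_n + vv^T} = I_n + (c-1)\,vv^T/\lVert v\rVert^2$, which holds because the rank-one update $I_n + vv^T$ has eigenvalue $1 + \lVert v\rVert^2 = c^2$ on $\mathrm{span}(v)$ and eigenvalue $1$ on $v^\perp$, so its unique positive-definite square root has eigenvalue $c$ on $\mathrm{span}(v)$ and $1$ on $v^\perp$ — exactly the right-hand side. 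Combining, $A = \mathrm{diag}(1,R)\,\mathrm{diag}(1,S)\,B_\alpha\,\mathrm{diag}(1,S^T) = \mathrm{diag}(1,RS)\,B_\alpha\,\mathrm{diag}(1,S^T)$, so \eqref{eq:decomposition2} holds with $P = RS \in \mathbf{SO}(n)$ and $Q = S \in \mathbf{SO}(n)$.

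I expect the only genuinely non-routine step to be the matrix-square-root identity just stated (the rest being block-matrix bookkeeping), together with the care needed to keep the auxiliary rotation $S$ inside $\mathbf{SO}(n)$ rather than merely $\mathbf{O}(n)$, which is precisely what forces the separate trivial treatment of $n = 1$. As a final sanity check one should verify that each of the three factors really lies in $\mathbf{SO}^+(1,n)$: $\mathrm{diag}(1,P)$ and $\mathrm{diag}(1,Q^T)$ do because $P, Q \in \mathbf{SO}(n)$, and $B_\alpha$ does because $\cosh\alpha > 0$, $\det B_\alpha = 1$, and a direct computation gives $B_\alpha J_n B_\alpha^T = J_n$.
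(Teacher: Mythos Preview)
Your argument is correct. The paper does not actually supply a proof of this fact---it is stated as a background ``Fact'' without justification (only Fact~3, the polar decomposition, carries a citation)---so there is no paper proof to compare against. Your route, deriving \eqref{eq:decomposition2} from Fact~3 by conjugating the general boost $B$ into the axis-aligned boost $B_\alpha$ via a spatial rotation $S$ that carries $e_1$ to $v/\lVert v\rVert$, is the natural one, and the key identity $\sqrt{I_n+vv^T}=I_n+(c-1)vv^T/\lVert v\rVert^2$ is verified correctly via the eigendecomposition. The edge cases ($v=0$, $n=1$) and the care taken to keep $S\in\mathbf{SO}(n)$ rather than $\mathbf{O}(n)$ are handled appropriately.
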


The matrix in the middle is the Lorentz boost along the first coordinate axis. This decomposition will be very useful in the optimization problem stated in Section~\ref{sec:optimization}, equation \eqref{eq:linear layer}.

We now conclude this section by presenting the explicit closed form formulae for the exponential map and the geodesic distance. For any $\boldsymbol{x} \in \mathbb{L}^n$ and $\boldsymbol{v} \in T_p\mathbb{L}^n$ (the tangent space of $\mathbb{L}^n$ at $\boldsymbol{x}$), the exponential map at $\boldsymbol{x}$ is given by 
\begin{equation}\label{eq:expmap}
    \text{Exp}_{\boldsymbol{x}}(\boldsymbol{v}) = \cosh(\lVert \boldsymbol{v}\rVert_L) \boldsymbol{x}+\sinh(\lVert \boldsymbol{v}\rVert_L) \boldsymbol{b}/\lVert \boldsymbol{v}\rVert_L.
\end{equation}
Since $\mathbb{L}^n$ is a negatively curved Riemannian manifold, its exponential map is invertible and the inverse of the exponential map, also called the Log map, is given by
\begin{equation}\label{eq:logmap}
    \text{Log}_{\boldsymbol{x}}(\boldsymbol{y}) = \frac{\theta}{\sinh(\theta)}(\boldsymbol{y}-\cosh(\theta)\boldsymbol{x})
\end{equation}
where $\boldsymbol{x}, \boldsymbol{y} \in \mathbb{L}^n$ and $\theta$ is the geodesic distance between $\boldsymbol{x}$ and $\boldsymbol{y}$ given by
\begin{equation}\label{eq:geodesics}
    \theta = d_{\mathbb{L}}(\boldsymbol{x},\boldsymbol{y}) = \cosh^{-1} (-\langle \boldsymbol{x},\boldsymbol{y}\rangle_L).
\end{equation}

\section{Nested Hyperbolic Spaces and Networks}\label{sec:mainwork}
\begin{figure}[t]
  \centering
  %\fbox{\rule{0pt}{2in} \rule{0.9\linewidth}{0pt}}
   \includegraphics[width=0.9\linewidth, trim=120 450 100 200,clip]{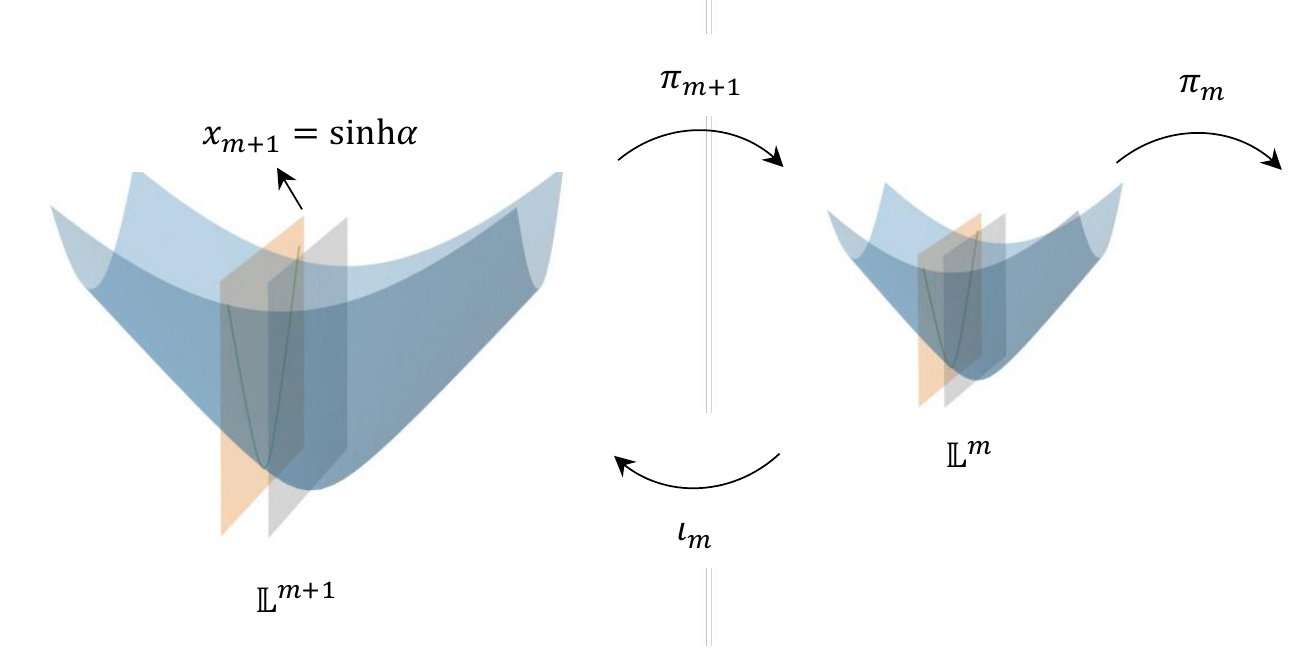}
    \label{fig:nestedhyperboloid}
   \caption{Illustration of NH model using the embedding $\iota_m$ in Eq.~\eqref{eq:embedding} of $\mathbb{L}^{m}$ into $\mathbb{L}^{m+1}$. The $m$-dimensional nested hyperboloid in $\mathbb{L}^{m+1}$ is indeed the intersection of $\mathbb{L}^{m+1}$ and an $m$-dimensional hyperplane.}
   \label{fig:nested_illustraion}
\end{figure}

In this section, we first present the construction of nested hyperboloids (NHs); an illustration of the NHs are given in Figure~\ref{fig:nested_illustraion}. We also prove that the proposed NHs possess several nice properties, including the isometry property and the equivariance under the Lorentz transformations. Then we use the NH representations to design a novel graph convolutional network architecture, called Nested Hyperbolic Graph Convolutional Network (NHGCN).

%We first define the nested structure in the hyperbolic space. The nested hyperbolic representation is analogous to nested sphere \cite{jung2012analysis}.
\subsection{The Nested Hyperboloid Representation}

The key steps to the development of the NHs are the embedding of $\mathbb{L}^m$ into $\mathbb{L}^n$ for $m < n$ and the projection from $\mathbb{L}^n$ to $\mathbb{L}^m$. The principle  
%as suggested by \cite{yang2021nested}, 
is to define an embedding of the corresponding groups of isometries, $\mathbf{SO}^+(1,m)$ and $\mathbf{SO}^+(1,n)$. % \textcolor{red}{WE ARE HURTING OURSELF BY CITING OUR EARLIER WORK SO MANY TIMES. REVIEWER WILL REJECT THE PAPER SAYING NOTHING IS NOVEL?}

%We first define embedding from low dimensional hyperbolic space to high hyperbolic space and then we will have corresponding projection.
First, we consider the embedding $\tilde{\iota}_m: \mathbf{SO}^+(1,m) \to \mathbf{SO}^+(1,m+1)$ defined by 
\begin{align}\label{eq:adapted-GS}
    \tilde{\iota}_m (\boldsymbol{O}) = \text{adapted-GS}\left( \boldsymbol{\Lambda} \left[ \begin{array}{cc}
        \boldsymbol{O} & \boldsymbol{a^T} \\
        \boldsymbol{b} & c
    \end{array} \right] \right)
\end{align}
where $\boldsymbol{O} \in \mathbf{SO}^+(1,m)$, $a,b\in \mathbb{R}^{m+1}$, $c \neq a^TO^{-1}b$, and $\Lambda \in \mathbf{SO}^+(1,m+1).$ The function $\text{adapted-GS}(\cdot)$ is an adaptation of the standard Gram-Schmidt process to orthonormalize vectors with respect to the Lorentz inner product defined earlier.

The Riemannian submersion (see \cite{helgason1979differential} for the definition of a Riemannian submersion) $\pi:\mathbf{SO}^+(1,m) \to \mathbb{L}^m$ is given by $\pi(O) = O_1$ where $O \in \mathbf{SO}^+(1,m)$ and $O_1$ is the first column of $O$. Therefore, the induced embedding $\iota_m: \mathbb{L}^m \rightarrow \mathbb{L}^{m+1}$ is 
\begin{align}
    \iota_m(\boldsymbol{x}) 
= \boldsymbol{\Lambda} \left[ \begin{array}{c}
 \cosh(r) \boldsymbol{x}  \\
\sinh(r)
\end{array} 
\right]
= \cosh(r)\tilde{\boldsymbol{\Lambda}}\boldsymbol{x} + \sinh(r)\boldsymbol{v} \label{eq:embedding}
\end{align}
where $\boldsymbol{\Lambda} = [\tilde{\boldsymbol{\Lambda}}\quad \boldsymbol{v}] \in \mathbf{SO}^+(1,m+1)$. This class of embeddings is quite general as it includes isometric embeddings as special cases.
\begin{proposition}
The embedding $\iota_m: \mathbb{L}^m \to \mathbb{L}^{m+1}$ is isometric when $r = 0$.
\end{proposition}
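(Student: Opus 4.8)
The plan is to observe that setting $r=0$ collapses the two terms in \eqref{eq:embedding} into a single \emph{linear} map, and then to exploit that this linear map is assembled from columns of a matrix in $\mathbf{SO}^+(1,m+1)$, which by definition preserves the ambient Lorentzian inner product. Concretely, since $\cosh(0)=1$ and $\sinh(0)=0$, equation \eqref{eq:embedding} reduces to $\iota_m(\boldsymbol{x}) = \tilde{\boldsymbol{\Lambda}}\boldsymbol{x} = \boldsymbol{\Lambda}\,[\boldsymbol{x}^T,\,0]^T$; that is, $\iota_m = \boldsymbol{\Lambda}\circ j$, where $j:\mathbb{R}^{m+1}\hookrightarrow\mathbb{R}^{m+2}$ is the zero-padding inclusion $\boldsymbol{x}\mapsto[\boldsymbol{x}^T,\,0]^T$. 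A bookkeeping remark worth stating explicitly: $\mathbb{L}^m\subset\mathbb{R}^{m+1}$ while $\mathbb{L}^{m+1}\subset\mathbb{R}^{m+2}$, so $\boldsymbol{\Lambda}$ is $(m+2)\times(m+2)$ and $\tilde{\boldsymbol{\Lambda}}$ consists of its first $m+1$ columns.

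Next I would verify that $\iota_m$ indeed maps $\mathbb{L}^m$ into $\mathbb{L}^{m+1}$, so that the claim is meaningful. The inclusion $j$ already does this: $\langle j(\boldsymbol{x}), j(\boldsymbol{x})\rangle_L = \langle\boldsymbol{x},\boldsymbol{x}\rangle_L = -1$, because the appended coordinate is spacelike and set to zero, and the zeroth coordinate is unchanged, so $j(\boldsymbol{x})$ lies on the forward sheet. Then $\boldsymbol{\Lambda}\in\mathbf{SO}^+(1,m+1)$ maps $\mathbb{L}^{m+1}$ onto itself (it is an isometry preserving both the Lorentz quadratic form and the sign of the first coordinate on the forward hyperboloid), so the composition $\iota_m$ lands in $\mathbb{L}^{m+1}$.

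For the isometry claim itself I would argue at the level of differentials, i.e.\ show the pullback metric equals the original. Since $\iota_m$ is the restriction of a linear map, $d\iota_m$ at any $\boldsymbol{x}\in\mathbb{L}^m$ is that same linear map acting on $T_{\boldsymbol{x}}\mathbb{L}^m\subset\mathbb{R}^{m+1}$, and it carries tangent vectors into $T_{\iota_m(\boldsymbol{x})}\mathbb{L}^{m+1}$ because $\iota_m$ maps the submanifold into the submanifold. Then for $\boldsymbol{u},\boldsymbol{w}\in T_{\boldsymbol{x}}\mathbb{L}^m$,
$$\langle d\iota_m(\boldsymbol{u}),\,d\iota_m(\boldsymbol{w})\rangle_L = \langle\boldsymbol{\Lambda} j(\boldsymbol{u}),\,\boldsymbol{\Lambda} j(\boldsymbol{w})\rangle_L = \langle j(\boldsymbol{u}),\,j(\boldsymbol{w})\rangle_L = \langle\boldsymbol{u},\boldsymbol{w}\rangle_L,$$
where the middle equality uses $\boldsymbol{\Lambda}^T J_{m+1}\boldsymbol{\Lambda}=J_{m+1}$. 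Since the Riemannian metric on each hyperboloid is by definition the restriction of the Lorentzian inner product to its tangent spaces, this is exactly the assertion that $\iota_m$ is a Riemannian isometric embedding. Equivalently, and perhaps more concisely to present, one can bypass differentials and invoke \eqref{eq:geodesics} directly: $d_{\mathbb{L}}(\iota_m(\boldsymbol{x}),\iota_m(\boldsymbol{y})) = \cosh^{-1}(-\langle\iota_m(\boldsymbol{x}),\iota_m(\boldsymbol{y})\rangle_L) = \cosh^{-1}(-\langle\boldsymbol{x},\boldsymbol{y}\rangle_L) = d_{\mathbb{L}}(\boldsymbol{x},\boldsymbol{y})$.

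There is no serious obstacle here — the whole content is that composing with a Lorentz-orthogonal matrix and with the spacelike zero-padding preserves $\langle\cdot,\cdot\rangle_L$. The only points requiring care are the dimension bookkeeping noted above and, if one wishes to motivate why the hypothesis $r=0$ is genuinely needed, the observation that for $r\neq0$ the extra term $\sinh(r)\boldsymbol{v}$ is a constant offset that is not linear in $\boldsymbol{x}$, so $d\iota_m$ picks up a factor of $\cosh(r)$ and the map becomes a homothety with metric scaled by $\cosh^2(r)$ rather than an isometry.
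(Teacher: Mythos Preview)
Your argument is correct and is essentially a fleshed-out version of the paper's proof, which simply asserts that the result follows directly from the definitions of the Lorentz transformation and the geodesic distance on $\mathbb{L}^m$. In particular, your concluding one-line computation using \eqref{eq:geodesics} is exactly what the paper is gesturing at.
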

\begin{proof}
It follows directly from the definitions of the Lorentz transformation and the geodesic distance on $\mathbb{L}^m$.
\end{proof}
Furthermore, the embedding~\eqref{eq:embedding} is equivariant under Lorentz transformations.
\begin{theorem}
The embedding $\iota_m: \mathbb{L}^m \to \mathbb{L}^{m+1}$ is equivariant under Lorentz transformations of $\mathbf{SO}^+(1,m)$, i.e., $\iota_m(Rx) = \Psi_{\Lambda}(\tilde{\iota}_m(R))\iota_m(x)$ where $\Psi_g(h) = ghg^{-1}$.
\end{theorem}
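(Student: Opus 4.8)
The plan is to reduce the identity to a single substitution into the closed form \eqref{eq:embedding}, after which only a bookkeeping identification of $\tilde\iota_m$ with the obvious block-diagonal lift remains; the one piece carrying genuine content is the behaviour of the adapted Gram--Schmidt map under Lorentz transformations. Write $\widehat R := \operatorname{diag}(R,1) \in M_{m+2}(\mathbb{R})$ for $R \in \mathbf{SO}^+(1,m)$, write $\mathrm{GS}$ for the adapted Gram--Schmidt operator, and let $M(O) = \left[\begin{smallmatrix} O & a^{T}\\ b & c\end{smallmatrix}\right]$ be the block matrix in \eqref{eq:adapted-GS}, so that $\tilde\iota_m(O) = \mathrm{GS}(\Lambda M(O))$. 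First I would substitute $Rx$ into \eqref{eq:embedding}. Because $J_{m+1} = \operatorname{diag}(J_m,1)$, the matrix $\widehat R$ lies in $\mathbf{SO}^+(1,m+1)$ whenever $R \in \mathbf{SO}^+(1,m)$, and $\left[\begin{smallmatrix}\cosh(r)Rx\\ \sinh(r)\end{smallmatrix}\right] = \widehat R\left[\begin{smallmatrix}\cosh(r)x\\ \sinh(r)\end{smallmatrix}\right]$; inserting $\Lambda^{-1}\Lambda$ then yields $\iota_m(Rx) = \Lambda\widehat R\Lambda^{-1}\bigl(\Lambda\left[\begin{smallmatrix}\cosh(r)x\\ \sinh(r)\end{smallmatrix}\right]\bigr) = \Psi_\Lambda(\widehat R)\,\iota_m(x)$. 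So, up to matching the right-hand side of the theorem, we are done: it remains to see that the transformation $\Psi_\Lambda(\widehat R)$ of $\iota_m(\mathbb{L}^m)$ coincides with $\Psi_\Lambda(\tilde\iota_m(R))$, i.e.\ to pin down the precise relation between $\tilde\iota_m(R)$ and the naive block lift $\widehat R$ (with the natural choice of the auxiliary data $a,b,c$ the off-diagonal block of $M$ vanishes and the two agree up to the ambient factor $\Lambda$).

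To carry out that identification cleanly --- and, incidentally, to handle the case $r\neq 0$ uniformly --- I would route the argument through the submersions. The first-column map $\pi:\mathbf{SO}^+(1,k)\to\mathbb{L}^k$ is equivariant, $\pi(gO) = g\,\pi(O)$, and the construction of $\iota_m$ from $\tilde\iota_m$ is exactly the commuting square $\pi\circ\tilde\iota_m = \iota_m\circ\pi$: the first column of $\tilde\iota_m(O)$ is $\iota_m(O_1)$, which holds because the first step of Gram--Schmidt is merely a normalization that does not mix in the later columns (this is what fixes $r$ in terms of the data $b$). Using transitivity of the $\mathbf{SO}^+(1,m)$-action on $\mathbb{L}^m$ I would write $x = \pi(O)$; then $Rx = \pi(RO)$, so $\iota_m(Rx) = \pi(\tilde\iota_m(RO))$. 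The crux is the twisted multiplicativity $\tilde\iota_m(RO) = \Psi_\Lambda(\widehat R)\,\tilde\iota_m(O)$, which rests on $M(RO) = \widehat R\,M(O)$ together with the Lorentz-naturality $\mathrm{GS}(gN) = g\,\mathrm{GS}(N)$ for $g\in\mathbf{SO}^+(1,m+1)$ (applied after rewriting $\Lambda\widehat R = \Psi_\Lambda(\widehat R)\,\Lambda$ and pulling the Lorentz factor out through $\mathrm{GS}$). Given this relation, applying $\pi$, using its equivariance, and using $\pi(\tilde\iota_m(O)) = \iota_m(x)$ closes the argument.

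I expect the main obstacle to be precisely this naturality of the adapted Gram--Schmidt process under Lorentz transformations. For a positive-definite inner product $\mathrm{GS}(gN) = g\,\mathrm{GS}(N)$ is immediate: $g$ preserves every inner product the algorithm uses and leaves the upper-triangular factor untouched, so uniqueness of the ``$QR$-type'' factorization finishes it. For the indefinite Lorentzian form one must instead verify that the adapted procedure is even well defined on the matrices in question --- the successive partial frames must stay non-null and the normalization signs chosen consistently --- and that the resulting factorization is genuinely canonical, before uniqueness can be invoked. A secondary point to keep straight is the side on which the conjugating factor acts: since $\Psi_g(h) = ghg^{-1}$ is not symmetric in its two arguments, one must check that the $\Lambda$ introduced by $\Lambda\widehat R = \Psi_\Lambda(\widehat R)\Lambda$ cancels correctly once $\pi$ is applied.
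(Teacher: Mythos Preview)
Your first paragraph \emph{is} the paper's proof: the authors substitute $Rx$ into \eqref{eq:embedding}, factor out $\widehat R = \operatorname{diag}(R,1)$, insert $\Lambda^{-1}\Lambda$, and declare the result equal to $\Psi_\Lambda(\tilde\iota_m(R))\iota_m(x)$ in a single four-line display. That is the entirety of their argument.

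Everything you do from ``it remains to see\ldots'' onward goes beyond the paper. The authors simply identify $\tilde\iota_m(R)$ with $\widehat R$ in the last line, without revisiting the adapted Gram--Schmidt definition \eqref{eq:adapted-GS}, without invoking the submersion square, and without discussing well-definedness of the indefinite $QR$-factorization or the Lorentz-naturality $\mathrm{GS}(gN)=g\,\mathrm{GS}(N)$. Your extra care is warranted --- with generic $a,b,c$ and $\Lambda$ the identity $\tilde\iota_m(R)=\widehat R$ is not automatic, and you are right that it only becomes transparent once one fixes the natural block-diagonal data --- but the paper treats this as tacit. So: same approach at the core, with your proposal filling in a justification the paper leaves implicit.
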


\begin{proof}
For $x \in \mathbb{L}^m$ and $R \in \mathbf{SO}^+(1,m)$,
\begin{align*}\label{eq:equivariant}
    \iota_m(Rx) & = \Lambda\left[ \begin{array}{c} \cosh(r)Rx \\ \sinh(r) \end{array} \right]\\
& = \Lambda \left[ \begin{array}{cc} R & 0 \\ 0 & 1 \end{array} \right]
\left[ \begin{array}{c} \cosh(r)x \\ \sinh(r) \end{array} \right]\\
& = \Lambda \left[ \begin{array}{cc} R & 0 \\ 0 & 1 \end{array} \right]\Lambda^{-1}\Lambda
\left[ \begin{array}{c} \cosh(r)x \\ \sinh(r) \end{array} \right]\\
& = \Psi_{\Lambda}(\tilde{\iota}_m(R))\iota_m(x).
\end{align*}
\end{proof}

The projection $\pi_{m+1}:\mathbb{L}^{m+1} \rightarrow \mathbb{L}^m$ corresponding to $\iota_m$ is given by,
\begin{equation}\label{eq:projection1}
\begin{split}
     \pi_{m+1} (\boldsymbol{x}) &= \frac{1}{\cosh{r}}J_m \tilde{\boldsymbol{\Lambda}}^TJ_{m+1}\boldsymbol{x}\\
     &= \frac{J_{m} \tilde{\boldsymbol{\Lambda}}^TJ_{m+1}\boldsymbol{x}}{\lVert J_{m} \tilde{\boldsymbol{\Lambda}}^TJ_{m+1}\boldsymbol{x} \rVert_L}
\end{split}
\end{equation}
for $\boldsymbol{x} \in \mathbb{L}^{m+1}$. 
Hence, the reconstructed point $\hat{\boldsymbol{x}}\in \mathbb{L}^{m+1}$ of $\boldsymbol{x} \in \mathbb{L}^{m+1}$ is 
\begin{align}\label{eq:reconstructed_point}
    \hat{\boldsymbol{x}} = \cosh (r) \boldsymbol{\Lambda} \frac{J_{m} \tilde{\boldsymbol{\Lambda}}^TJ_{m+1}\boldsymbol{x}}{\lVert J_{m} \tilde{\boldsymbol{\Lambda}}^TJ_{m+1}\boldsymbol{x} \rVert_L} + \sinh(r) \boldsymbol{v}.
\end{align}
The unknowns $\boldsymbol{\Lambda}= [\tilde{\boldsymbol{\Lambda}}\quad \boldsymbol{v}]$ and $r$ can then be obtained by minimizing the reconstruction error
\begin{align}\label{reconstruction_error}
    L(\boldsymbol{\Lambda},r) = \frac{1}{N} \sum\limits_{i=1}^{N} \left(d_{\mathbb{L}}(\boldsymbol{x}_i,\hat{\boldsymbol{x}_i})\right)^2.
\end{align}

The projection of $\boldsymbol{x} \in \mathbb{L}^n$ into $\mathbb{L}^m$ for $n>m$ can be obtained via the composition $\pi:= \pi_{m+1} \circ \dots \circ \pi_{n}$
\begin{equation}\label{eq:projection}
\begin{split}
         \pi(\boldsymbol{x}) &= J_{m} \left(\prod_{i=m+1}^{n} \frac{1}{\cosh(r_i)}  \tilde{\boldsymbol{\Lambda}}_i \right)^T J_{n}\boldsymbol{x}\\
          & = \frac{J_{m} \boldsymbol{M}^T J_{n}\boldsymbol{x}}{\lVert J_{m} \boldsymbol{M}^T J_{n}\boldsymbol{x} \rVert_L}
\end{split}
\end{equation}
where  $\boldsymbol{M} = \prod_{i=m+1}^n \tilde{\boldsymbol{\Lambda}}_i \in \mathbb{R}^{(n+1) \times (m+1)}$.
\subsection{Nested Hyperbolic Graph Convolutional Network (NHGCN)}

The Hyperbolic Graph  Convolutional Network (HGCN) proposed in \cite{chami2019hyperbolic} is a generalization of Euclidean Graph Network to a hyperbolic space. There are three different layers in HGCN: feature transformation, neighborhood aggregation and non-linear activation. We use our NH representation to define a hyperbolic feature transformation, the weighted Fr\'{e}chet mean to define the neighborhood aggregation and use a tangent ReLU activation. This leads to a novel HGCN architecture. Figure \ref{fig:architecture} depicts the HGCN architecture. Each of the three distinct layers are described in detail below. 

%\begin{figure} \centering
%    \begin{subfigure}[b]{0.3\linewidth}
%    \centering
%        \includegraphics[width=0.8\linewidth, trim=200 350 120 100,clip]{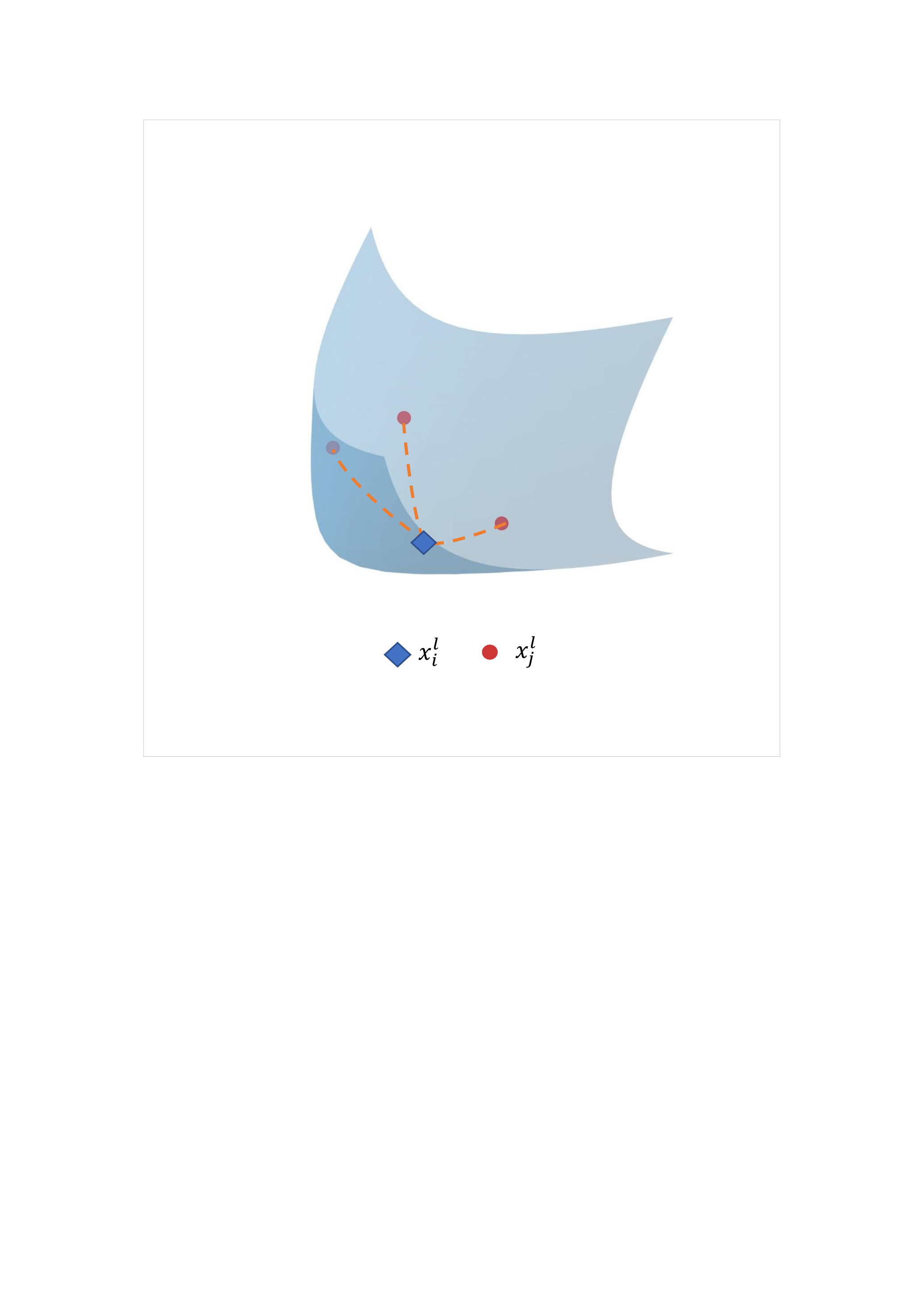}
%        %\caption{Input node and its neighbors}\label{fig:input_nodes}
%    \end{subfigure} 
%    \begin{subfigure}[b]{0.3\linewidth}   
%    \centering
%        \includegraphics[width=0.8\linewidth, trim=200 350 120 100,clip]{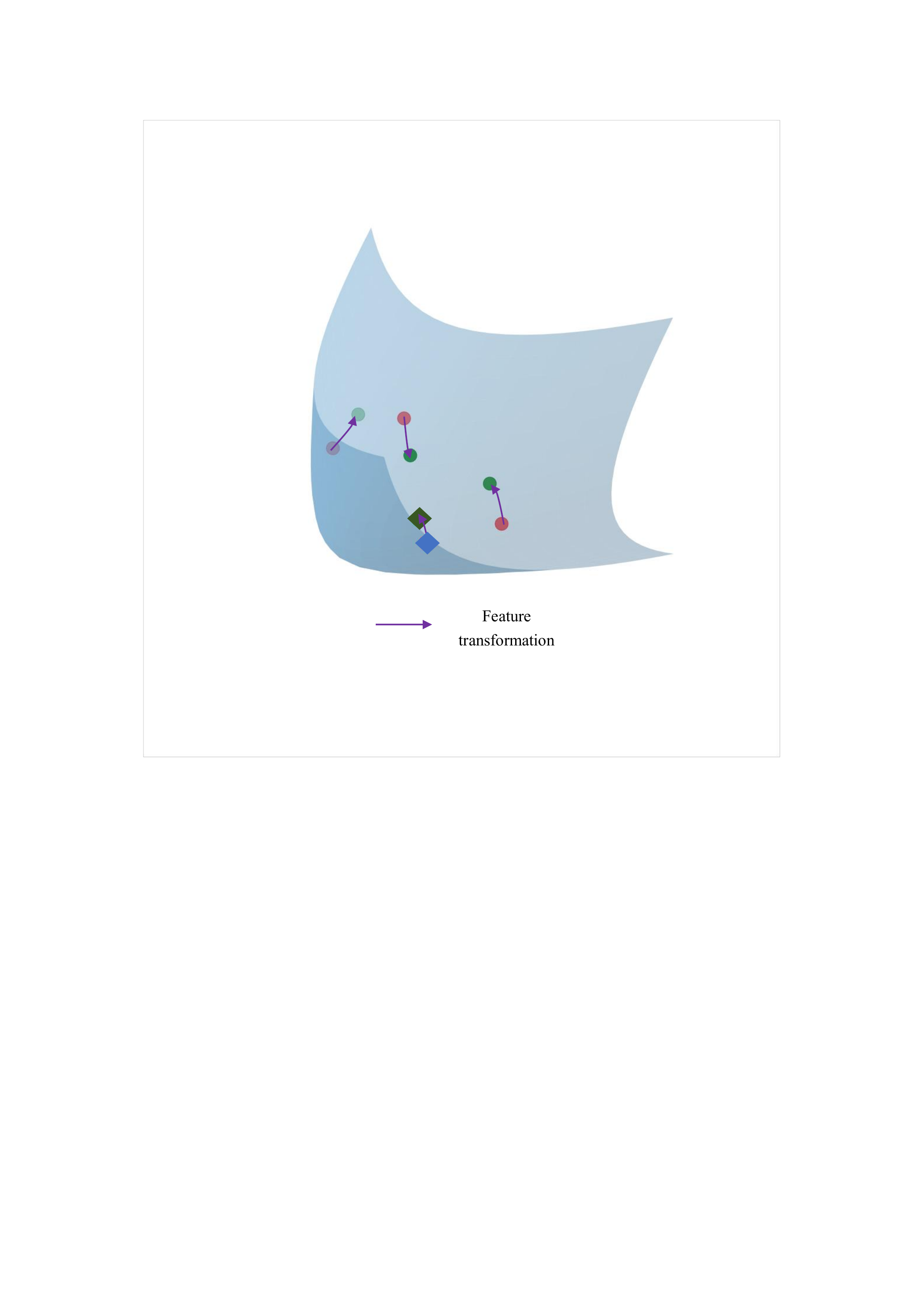}
%        %\caption{Feature transformation}\label{fig:feature_transormation}    
%    \end{subfigure} 
%    \begin{subfigure}[b]{0.3\linewidth}   
%    \centering
%        \includegraphics[width=0.8\linewidth, trim=200 350 120 100,clip]{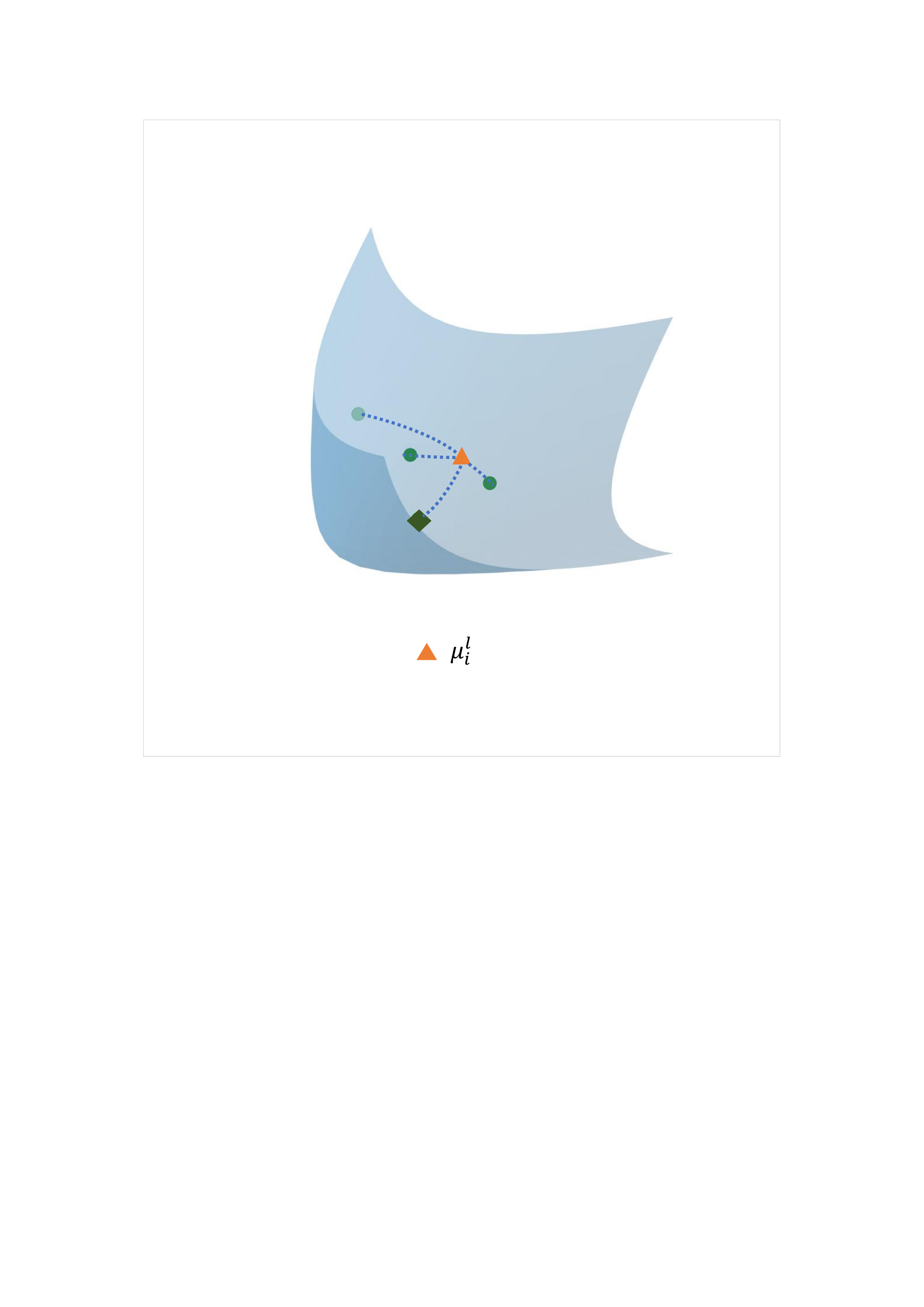}
%        %\caption{Neighborhood aggregation}\label{fig:Neibor_aggregation}    
%    \end{subfigure} 
%    \caption{Hyperbolic Graph Network Architecture}
%    \label{fig:architecture}
%\end{figure}

\begin{figure}
    \centering
    \includegraphics[height=8cm]{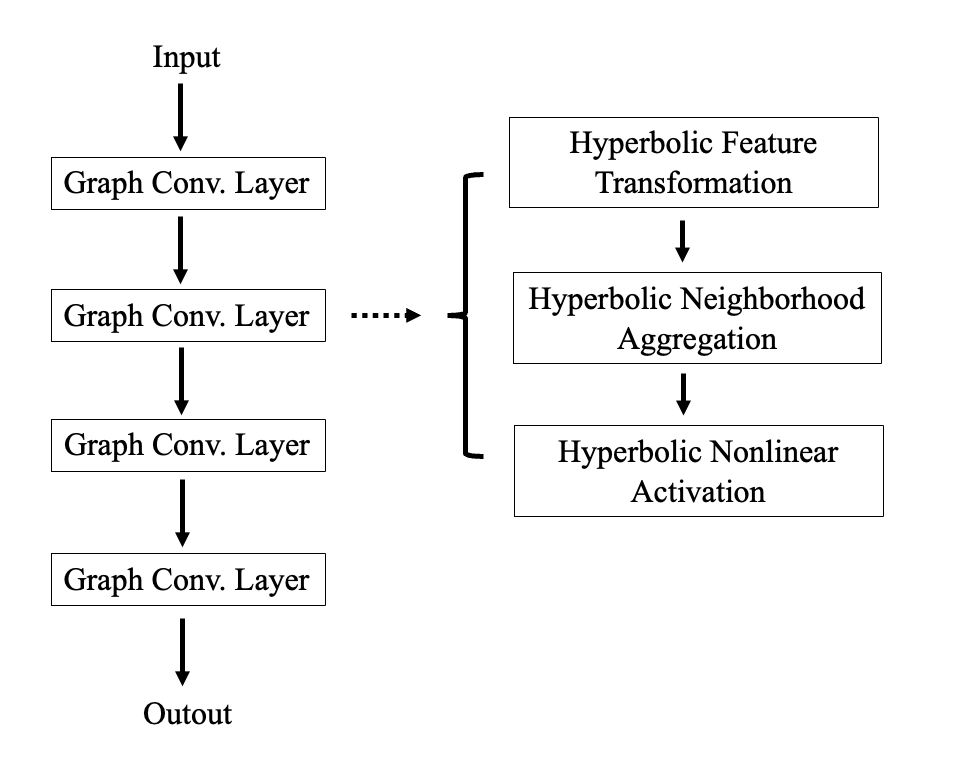}
    \caption{The HGCN Architecture}
    \label{fig:architecture}
\end{figure}

% \textcolor{red}{Please put a small figure of the architecture here. I have already written a short sentence here. Use the caption:HGCN Architecture.}

\textbf{Hyperbolic Feature Transformation:} 
Given $\boldsymbol{x} \in \mathbb{L}^n$, the hyperbolic feature transformation is defined using \eqref{eq:projection} as follows
\begin{equation}\label{eq:linear layer}
\begin{split}
    \boldsymbol{y} = \frac{\boldsymbol{W} \boldsymbol{x}}{\lVert \boldsymbol{W} \boldsymbol{x} \rVert_L} \qquad
    \text{s.t. } \boldsymbol{W} J_{n}\boldsymbol{W}^T = J_{m}
\end{split}
\end{equation}
where $\boldsymbol{W} \in \mathbb{R}^{(m+1)\times (n+1)}$. It is easy to prove that $\boldsymbol{y} \in \mathbb{L}^m$.

At the $l$-th layer, the inputs are the hyperbolic representation $\boldsymbol{x}_i^{l-1}$ from the previous layer and the feature transformation matrix is $\boldsymbol{W}^l$. The intermediate hyperbolic representation of $i$-th node is computed as follows
\begin{align}
    \boldsymbol{x}_i^l = \frac{\boldsymbol{W}^l \boldsymbol{x}_i^{l-1}}{\lVert\boldsymbol{W}^l \boldsymbol{x}_i^{l-1} \rVert } \qquad
    \text{s.t. } {\boldsymbol{W}^{l}} J_{n_{l-1}}{{\boldsymbol{W}^{l}}^T} = J_{n_{l}}
\end{align}

\textbf{Hyperbolic Neighborhood Aggregation:}
In GCNs, the neighborhood aggregation is used to combine neighboring features by computing the weighted centroid of these features. The weighted centroid in hyperbolic space of a point set  $\{\boldsymbol{x}_i\}_{i=1} \in \mathbb{L}^n$ is obtained using the weighted Fr\'{e}chet mean. However, it does not have closed form expression in hyperbolic space. We use hyperbolic neighborhood aggregation proposed in \cite{chen2021fully,zhang2021lorentzian}, where aggregated representation for a node $\boldsymbol{x}_i^l$ at $l$-th layer is the weighted centroid $\boldsymbol{\mu}^l_i$ of its neighboring nodes $\{\boldsymbol{x}_j^l\}_{j=1}^p \in  \mathbb{L}^{n_l}$ w.r.t \emph{squared Lorentzian distance}, namely
%is the minimizer of expectation of squared Lorentzian distance, namely
\begin{align}
    \boldsymbol{\mu}_i^l= \arg\min_{\boldsymbol{\mu}^l\in\mathbb{L}^{n_l}} \sum_{j=1}^{p} \nu_j^l d_{\mathbb{L}}^2(\boldsymbol{x}_j^l,\boldsymbol{\mu}_i^l)
\end{align}
where $\nu_j^l$ is the weight for $\boldsymbol{x}_j^l$ and $d^2_{\mathbb{L}} (\boldsymbol{x},\boldsymbol{y}) = -1- \langle\boldsymbol{x},\boldsymbol{y}\rangle_L$ is the squared Lorentzian distance\cite{ratcliffe2006foundations}. Authors in \cite{law2019lorentzian} proved that this problem has closed form solution given by,
 \begin{align}
     \boldsymbol{\mu}_i^l = \frac{\sum_{j=1}^p\nu_j^l\boldsymbol{x}_j^l}{\lvert \lVert \sum_{j=1}^p\nu_j^l\boldsymbol{x}_j^l \rVert_L\rvert}.
 \end{align}

\textbf{Hyperbolic Nonlinear Activation:} A nonlinear activation is required in our network since the feature transform is a linear operation. We choose to apply tangent ReLU to prevents our multi-layer network from collapsing into a single layer network. The tangent ReLU in the hyperbolic space is defined as,
\begin{equation}\label{eq:activation}
\sigma(\boldsymbol{x}_i^l) = \text{Exp}_{\boldsymbol{0}}(\text{ReLU}(\text{Log}_{\boldsymbol{0}}(\boldsymbol{x}_i^l))).
\end{equation}
Here $\boldsymbol{0}= [1,0,\dots,0]^T \in \mathbb{L}^{n_l}$ (correspond to the origin in the Poincar\'{e} model) is chosen as the base point to define the anchor point in the tangent ReLU.

\subsection{Optimization}\label{sec:optimization}
In this section, we will explain how to update parameters in network, i.e.\ transformation matrix $\boldsymbol{W}$ in  \eqref{eq:linear layer}. Instead of updating $\boldsymbol{W}$ directly, we find an alternative way by decomposing $\boldsymbol{W}$ into three matrices using \eqref{eq:decomposition2}. More specifically, we write 
\begin{align*}
    \boldsymbol{W} = \left[ \begin{array}{cc}
1 & 0 \\
0 & \widetilde{\boldsymbol{P}}
\end{array}\right]
 \left[ \begin{array}{ccc}
\cosh{\alpha} & \sinh{\alpha} & 0^T\\
\sinh{\alpha} & \cosh{\alpha} & 0^T\\
0& 0 & I_{n-1}
\end{array}\right]
\left[ \begin{array}{cc}
1 & 0 \\
0 & \boldsymbol{Q}^T
\end{array}\right]
\end{align*}
where $\boldsymbol{Q} \in \mathbf{SO}(n),\alpha \in \mathbb{R}$ and $\widetilde{\boldsymbol{P}}$ is the first $m$ rows of a $\boldsymbol{P} \in \mathbf{SO}(n)$ which is from a \emph{Stiefel manifold}\cite{edelman1998geometry}. Then we regard our feature transformation as a sequence of multiplication by three matrices and update them one by one. 
\section{Experiments}\label{sec:experiment}

In this section, we will first evaluate NH as a dimensionality reduction method compared with HoroPCA, tangent PCA and EPGA. We show that the proposed NH outperforms all of these method on both synthetic data and real data in terms of reconstruction error. Then, we apply the proposed NHGCN to the problems of link prediction and node classification on four graph data sets described in \cite{chami2019hyperbolic}. Our method yields results that are better or comparable to existing hyperbolic graph networks. The implementations are based on Pymanopt\cite{koep2016pymanopt} and GeoTorch\cite{lezcano2019trivializations} for dimensionality reduction and NHGCN respectively.

\subsection{Dimensionality Reduction in Hyperbolic Space}
First we present synthetic data experiments followed by experiments on real data.
\paragraph{Synthetic Experiments} As a dimensionality reduction method, we compare NH with three other competing methods: tangent PCA, EPCA, and HoroPCA. Note that the first two are applicable on any Riemannian manifolds and HoroPCA is proposed specifically for hyperbolic spaces as is our NH space method. The major difference between NH and the aforementioned competitors is that NH does not require the fitted submanifold to pass through the Fr\'{e}chet mean whereas the others do. This extra requirement can sometimes lead to failure in capturing the data trend as shown in Figure~\ref{fig:synthetic_hyperbola}. 

% \begin{figure} \centering
%     \begin{subfigure}[b]{0.4\linewidth}
%         \includegraphics[width=0.8\linewidth]{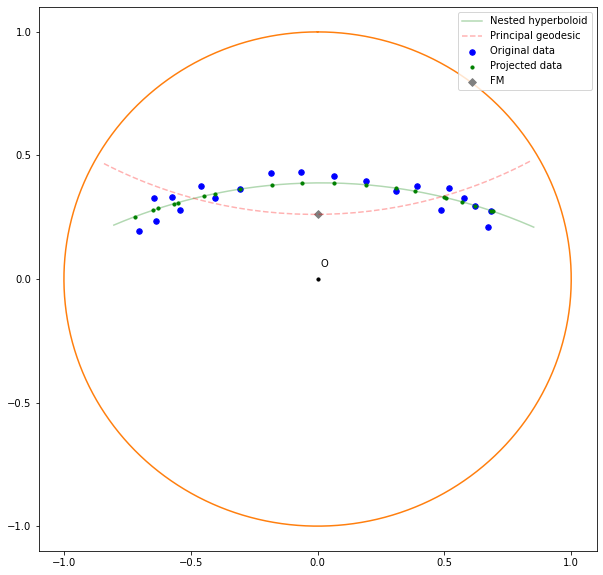}
%         \label{fig:hyperbola1}
%     \end{subfigure} 
%     \begin{subfigure}[b]{0.4\linewidth}    
%         \includegraphics[width=0.8\linewidth]{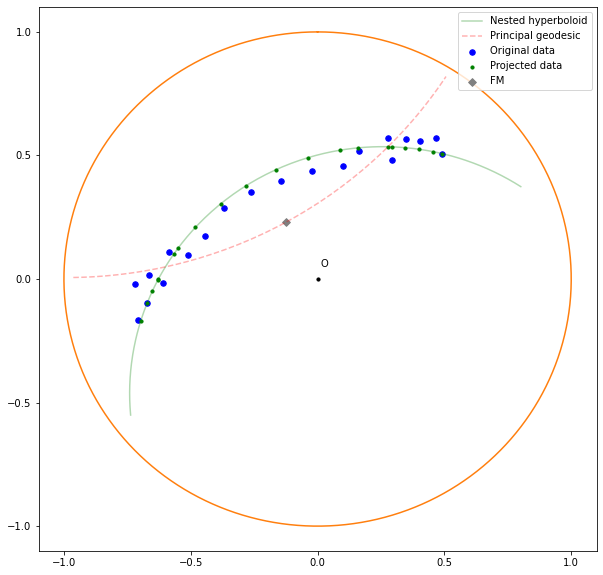}
%         \label{fig:hyperbola1Rx}    
%     \end{subfigure} 
%     \begin{subfigure}[b]{0.4\linewidth}
%         \includegraphics[width=0.8\linewidth]{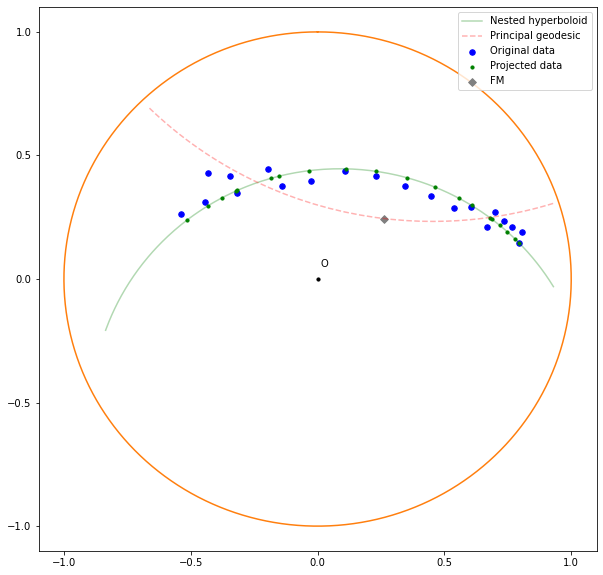}
%         \label{fig:hyperbola1Ry}
%     \end{subfigure} 
%     \begin{subfigure}[b]{0.4\linewidth}    
%         \includegraphics[width=0.8\linewidth]{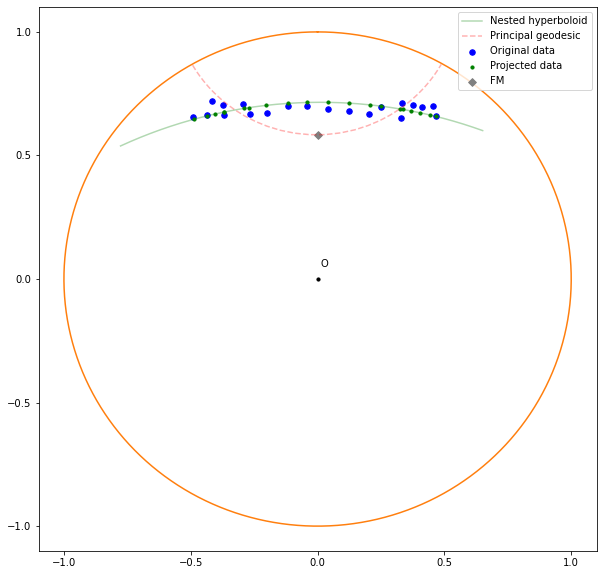}
%         \label{fig:hyperbola1Rz}    
%     \end{subfigure} 
%     \caption{Synthetic data in hyperbolic space visualized using the Poincar\'{e} disk model along with principal geodesics obtained using tangent PCA and NH. As evident, NH is better at capturing the trend of the data.}
%     \label{fig:synthetic_hyperbola}
% \end{figure}

\begin{figure} \centering
    \begin{subfigure}[b]{0.4\linewidth}
    \centering
        \includegraphics[width=0.8\linewidth]{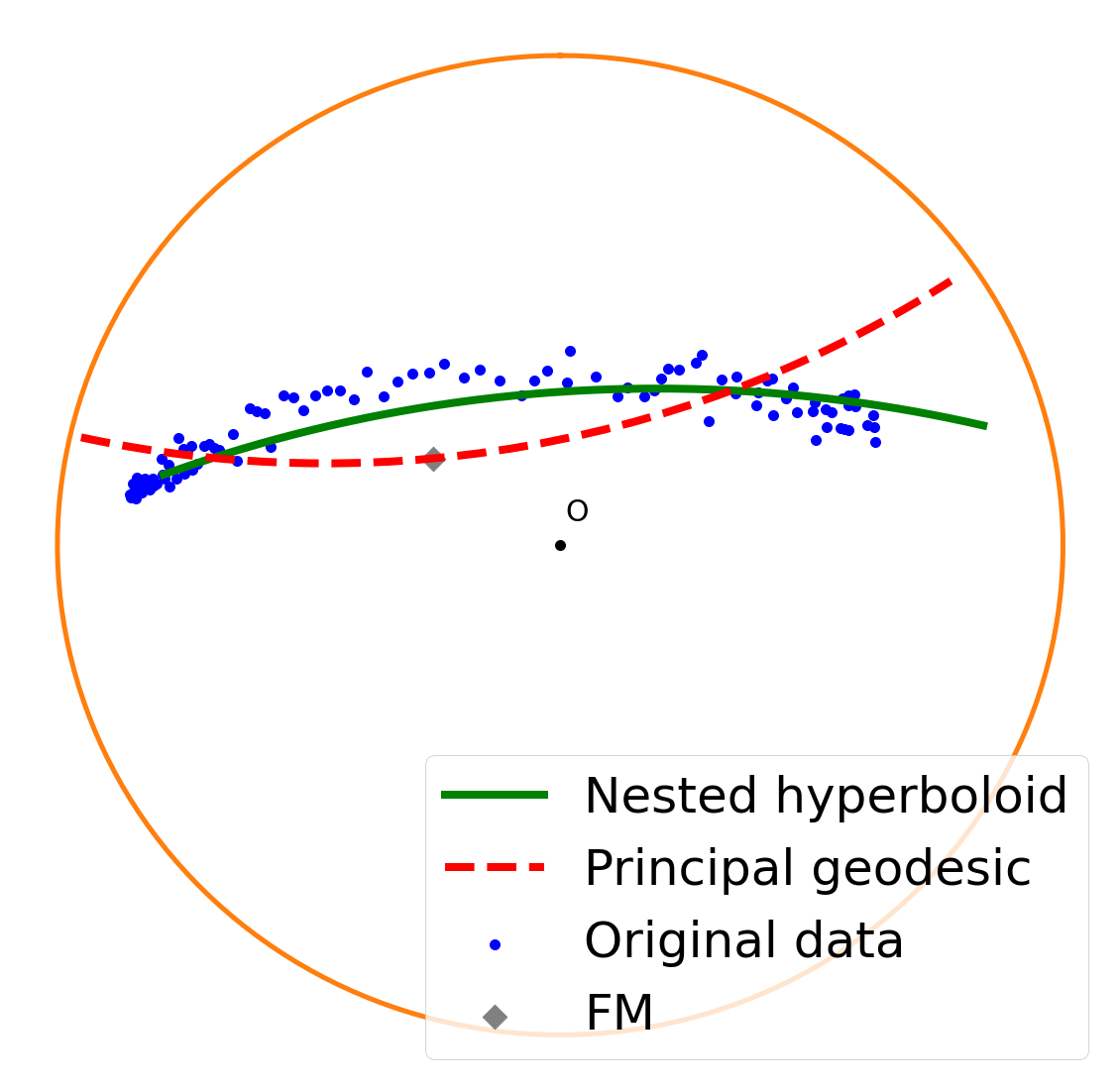}
        \label{fig:hyperbola1}
    \end{subfigure} 
    \begin{subfigure}[b]{0.4\linewidth}
    \centering
        \includegraphics[width=0.8\linewidth]{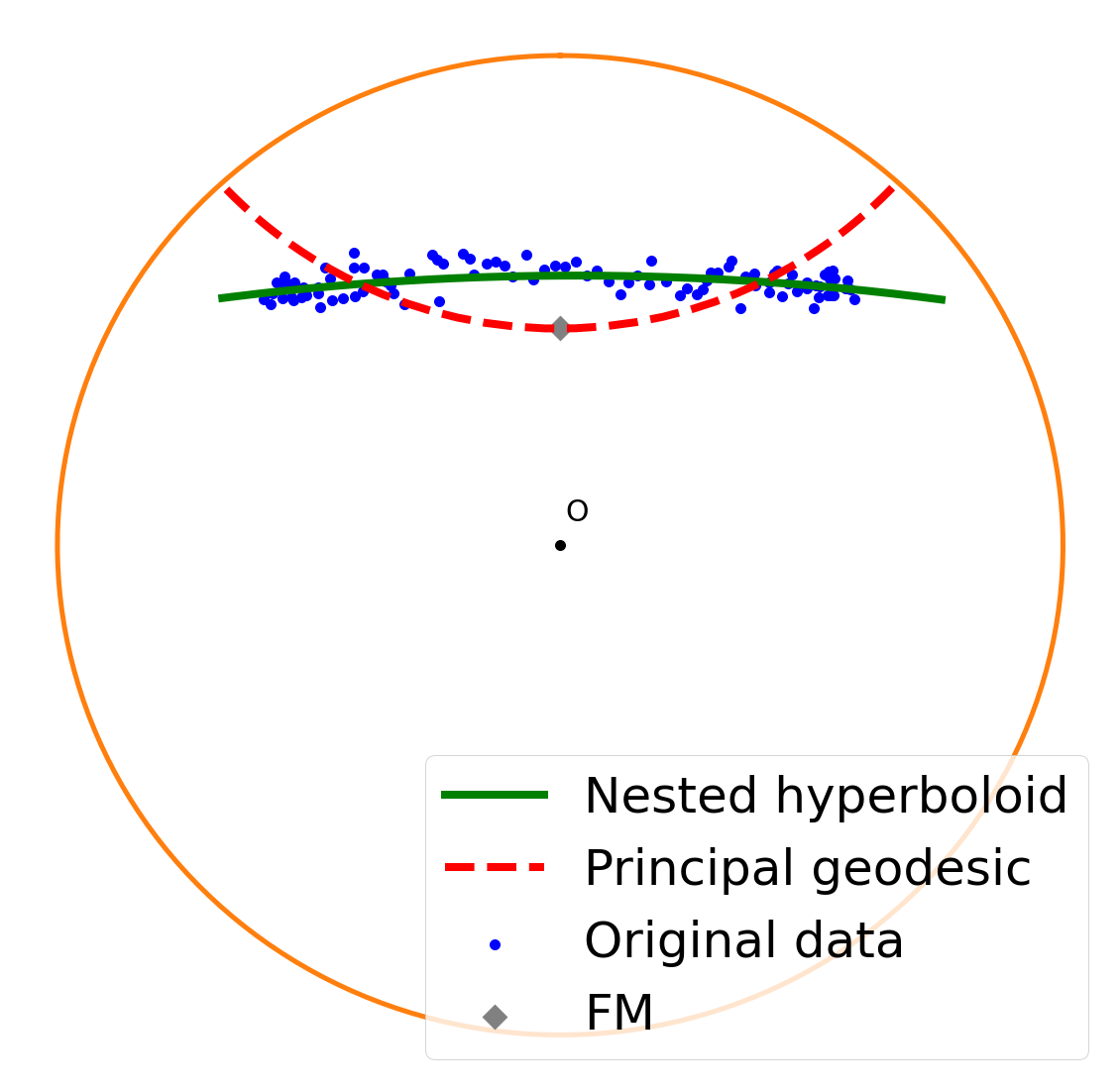}
        \label{fig:hyperbola1Rx}    
    \end{subfigure} 
    \caption{Synthetic data in hyperbolic space visualized using a Poincar\'{e} disk model along with principal geodesic obtained using tangent PCA and the NH. NH is better at capturing the trend of the data since it is not restricted to pass through the Fr\'{e}chet mean.}
    \label{fig:synthetic_hyperbola}
\end{figure}

Apart from visual inspection, we use the reconstruction error as a measure of the goodness of fit. To see how NH performs in comparison to others under different levels of noise, we generate synthetic data from the \emph{wrapped normal distribution} \cite{mathieu2019charline} on $\mathbb{L}^{10}$ with variance ranging from~0.2 to~2. Then we apply different dimensionality reduction methods to reduce the dimension down to 2. The result is shown in Figure~\ref{fig:variance_change}. The results of EPGA and NH are essentially the same. This is due to the fact that the wrapped normal distribution we chose is isotropic around the mean and hence in this case the assumption of submanifold passing through the Fr\'{e}chet mean is valid. Even in this case, we observe a significant improvement of NH over tangent PCA and HoroPCA especially in the large variance scenario. The main reasons are that (i) tangent PCA uses local linearization which would lead to inaccuracies when the data is not tightly clustered around the Fr\'{e}chet mean and (ii) the HoroPCA seeks to maximize the projected variance on the submanifold, which, as is well known, not equivalent to minimizing the reconstruction error. There is a clear justification for the choice of using reconstruction error as the objective function since, \emph{we want a good approximation of the original data with the lower-dimensional representation}.

\begin{figure}[t]
  \centering
  %\fbox{\rule{0pt}{2in} \rule{0.9\linewidth}{0pt}}
   \includegraphics[height=6cm]{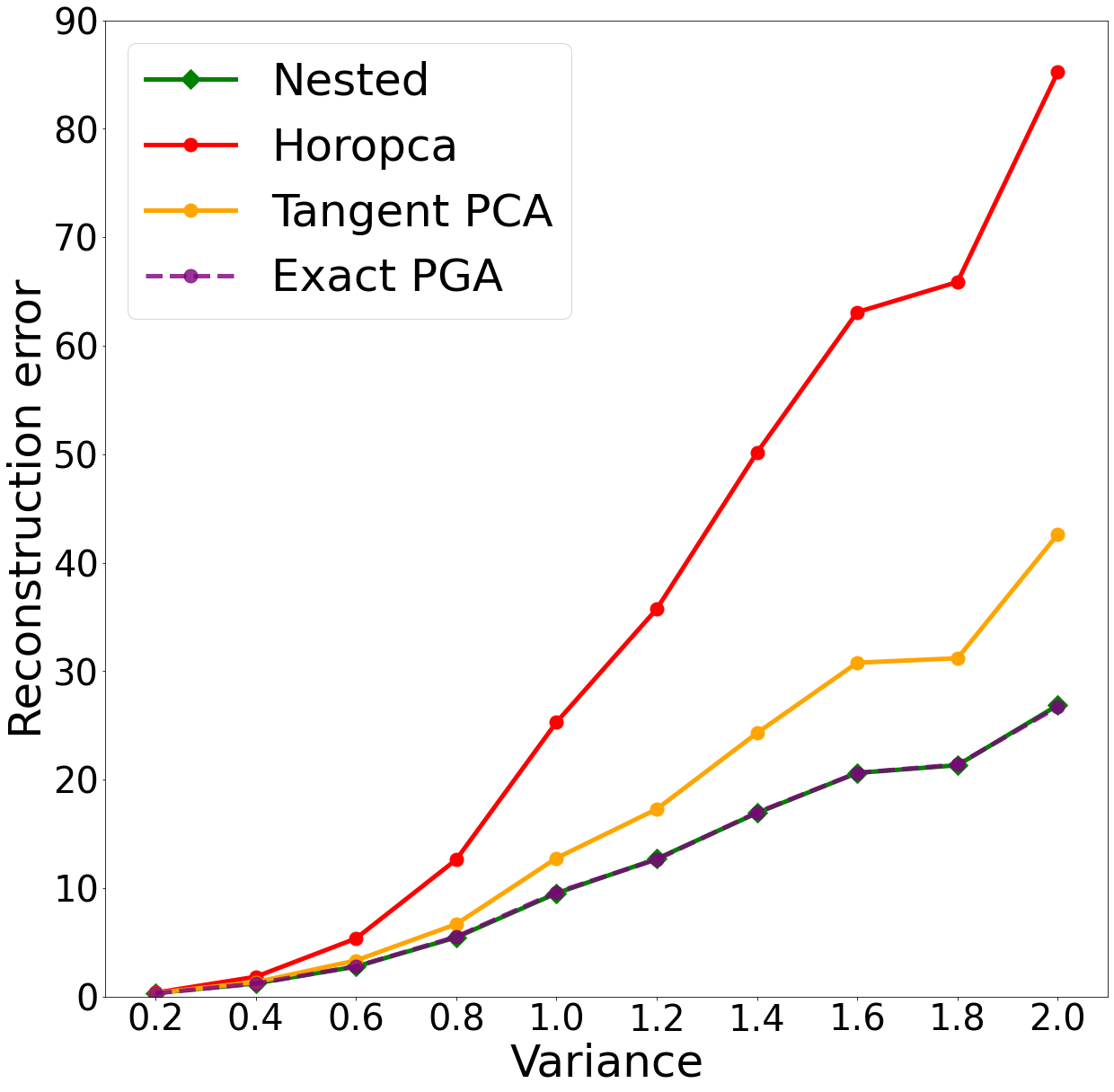}
   \caption{Reconstruction errors for $\mathbb{L}^{10}$ to $\mathbb{L}^2$. The data is generated from \emph{wrapped normal distributions} \cite{mathieu2019charline} with variances ranging from 0.2 to 2.}
   \label{fig:variance_change}
\end{figure}

%We compare NH to 3 dimentionality reduction methods: tangent PCA, EPGA and HoroPCA.

%We test our method on 4 datasets as in HoroPCA\cite{chami2021horopca} and construct another two unbalanced tree dataset by cutting edges in balanced tree dataset. 

%We report the reconstruction error in Table \ref{tab:dim reduction}, i.e.
%\begin{align*}
%    \frac{1}{N} \sum\limits_{i=1}^{N} d_{\mathbb{H}}^2(\boldsymbol{x}_i,\hat{\boldsymbol{x}_i})
%\end{align*}

\paragraph{Hyperbolic Embeddings of Trees} 

\begin{table*}
\centering
\scalebox{0.85}{
\begin{tabular}{|l|c|c|c|c|c|c|}
\toprule
Datasets & balancedtree & unbalanced1 & unbalanced2   & phylo tree &diseasome &ca-CSphd \\ \midrule
tPCA & 5.75  & 4.98 & 4.86 & 121.19& 21.53 & 71.67\\
HoroPCA & 7.80$\pm$0.06  &  6.51$\pm$0.28 & 7.35$\pm$0.61 & 108.62$\pm$9.20& 26.94$\pm$0.99 & 87.99$\pm$4.69\\
EPGA & \underline{4.01}$\pm$0.76  &   \underline{3.23}$\pm$0.08 & \underline{3.33}$\pm$0.46 &
\underline{25.93}$\pm$0.99 & \underline{9.72}$\pm$0.36 & 
\underline{22.98}$\pm$0.23\\
Nested & \textbf{3.35}$\pm$0.05 & \textbf{3.10}$\pm$0.01 & 
\textbf{3.22}$\pm$0.06 & 
\textbf{24.11}$\pm$0.68&
\textbf{9.18}$\pm$0.10  & 
\textbf{22.68}$\pm$0.40\\ \bottomrule
\end{tabular}
}
\caption{Reconstruction errors from $\mathbb{L}^{10}$ to $\mathbb{L}^2$. The numbers depicted are: mean error $\pm$ standard dev.\ of error. Numbers in bold indicate the method with the smallest errors while underlined numbers indicate the second best results. }
\label{tab:dim_reduction}
\end{table*}

\begin{table*}
\centering
\scalebox{0.85}{
\begin{tabular}{|l|c|c|c|c|c|c|c|c|}
\hline
 & \multicolumn{2}{c}{Disease} & \multicolumn{2}{c}{Airport} & \multicolumn{2}{c}{PubMed} & \multicolumn{2}{c|}{Cora} \\ \cline{2-9} 
Task & LP & NC & LP & NC & LP & NC & LP & NC \\ \hline
GCN\cite{kipf2016semi} & 64.7$\pm$0.5 & 69.7$\pm$0.4 & 89.3$\pm$0.4 & 81.4$\pm$0.6 & 91.1$\pm$0.5 & 78.1$\pm$0.2 & 90.4$\pm$0.2 & 81.3$\pm$0.3 \\
GAT\cite{velivckovic2018graph} & 69.8$\pm$0.3 & 70.4$\pm$0.4 & 90.5$\pm$0.3 & 81.5$\pm$0.3 & 91.2$\pm$0.1 & 79.0$\pm$0.3 & \underline{93.7}$\pm$0.1 & \underline{83.0}$\pm$0.7 \\
SAGE\cite{hamilton2017inductive} & 65.9$\pm$0.3 & 69.1$\pm$0.6 & 90.4$\pm$0.5 & 82.1$\pm$0.5 & 86.2$\pm$1.0 & 77.4$\pm$2.2 & 85.5$\pm$0.6 & 77.9$\pm$2.4 \\
SGC\cite{wu2019simplifying} & 65.1$\pm$0.2 & 69.5$\pm$0.2 & 89.8$\pm$0.3 & 80.6$\pm$0.1 & 94.1$\pm$0.0 & 78.9$\pm$0.0 & 91.5$\pm$0.1 & 81.0$\pm$0.1 \\ \hline
HGCN\cite{chami2019hyperbolic} & 90.8$\pm$0.3 & 74.5$\pm$0.9 & 96.4$\pm$0.1 & 90.6$\pm$0.2 & 96.3$\pm$0.0 & \underline{80.3}$\pm$0.3 & 92.9$\pm$0.1 & 79.9$\pm$0.2 \\
H2H-GCN\cite{dai2021hyperbolic} & \textbf{97.0}$\pm$0.3 & 88.6$\pm$1.7 & 96.4$\pm$0.1 & 89.3$\pm$0.5 &  \textbf{96.9}$\pm$0.0 & 79.9$\pm$0.5& \textbf{95.0}$\pm$0.0 & 82.8$\pm$0.4 \\
HYBONET\cite{chen2021fully} & 96.3$\pm$0.3 &  \textbf{94.5}$\pm$0.8 & \underline{97.0}$\pm$0.2 &  \textbf{92.5}$\pm$0.9 & 96.4$\pm$0.1 & 77.9$\pm$1.0 & 94.3$\pm$0.3 & 81.3$\pm$0.9 \\
LGCN\cite{zhang2021lorentzian} & \underline{96.6}$\pm$0.6 & 84.4$\pm$0.8 & - & - & \underline{96.6}$\pm$0.1 & 78.6$\pm$0.7 & 93.6$\pm$0.4 & \textbf{83.3}$\pm$0.7 \\
NHGCN(Ours) & 92.8$\pm$0.2 &\underline{91.7}$\pm$0.7&\textbf{97.2}$\pm$0.3& \underline{92.4}$\pm$0.7 &\textbf{96.9}$\pm$0.1 & \textbf{80.5}$\pm$0.0& 93.6$\pm$0.2 & 80.3$\pm0.8$\\ \hline
\end{tabular}
}
\caption{Area under the ROC test results (\%) for link prediction (LP), and F1 scores (\%) for node classification (NC). The results of other networks are obtained from the original papers and in \cite{zhang2021lorentzian}, the authors did not test their network on the Airport dataset.}
\label{tab:Graph_nn}
\end{table*}

For real data experiments, we consider reducing the dimensionality of trees that are embedded into a hyperbolic space. We validate our method on the four datasets described in \cite{sala2018representation} including (i) a fully balanced tree, (ii) a phylogenetic tree, (iii) a biological graph comprising of diseases’ relationships, and (iv) a graph of Computer Science (CS) Ph.D.\ advisor-advisee relationships. We also create another two datasets by removing some edges in the balanced tree dataset. We apply the method in \cite{gu2018learning} to embed the tree datasets into a Poincar\'{e} ball of dimension 10 and then apply our NH along with other competing dimensionality reduction methods to reduce the dimension down to 2. The results are reported in Table~\ref{tab:dim_reduction}. In Table~\ref{tab:dim_reduction}, we report the means and the standard deviations of the reconstruction errors for EPGA, HoroPCA and NH. From the table, we can see that our method performs the best among other methods. Especially, the HoroPCA is worse than the tangent PCA and EPGA in terms of reconstruction error, through it shows higher explained variance in \cite{chami2021horopca}. The reason might be that HoroPCA seeks projections that maximize the explained variance which is not equivalent to minimizing the reconstruction error in the Riemannian manifold case.

\subsection{Nested Hyperbolic Graph Networks} 

To evaluate the power of the proposed NHGCN, we apply it to problems of link prediction and node classification. We use four public domain datasets: Disease \cite{chami2019hyperbolic}, Airport \cite{chami2019hyperbolic}, PubMed \cite{namata2012query}, and Cora \cite{sen2008collective}. We compare our NHGCN with many other graph neural networks and the results are reported in Table~\ref{tab:Graph_nn}. For the link prediction (LP), we report the means and the standard deviation of the area under the receiver operating characterization (ROC) curve on the test data; for the problem of node classification (NC), we report the mean and the standard deviation of the F1 scores. As evident from the table, our results are comparable to the state-of-the-art and in three cases better. Our reported results can be improved further via the use of better Riemannian ADAM used in this work e.g., one with a built in variance reduction \cite{sato2019riemannian}.

\section{Conclusion}\label{sec:conclusion}
%Hyperbolic spaces have attracted enormous attention in the recent past in the machine learning community. This is primarily due to their geometry being well suited for hierarchically organized data sets. Many real world data sets exhibit heirarchical structure and thus can benefit from embedding in a hyperbolic space.
In this paper, we presented a novel dimensionality reduction technique in hyperbolic spaces called the nested hyperbolic (NH) space representation. NH representation was constructed using a projection operator that was shown to yield isometric embeddings and further was shown to be equivariant to the isometry group admitted by the hyperbolic space. Further, we empirically showed that it yields lower reconstruction error compared to the state-of-the-art (HorroPCA, PGA, tPCA). Using the NH representation, we developed a novel fully HGCN and tested it on several data sets. Our NHGCN was shown to achieve comparable to superior performance over several competing methods.

\vspace{1cm}
\textbf{Acknowledgement: } This research was in part funded by the NSF grant IIS-1724174 to Vemuri. 

% \include{hyperboloidmodel}

%%%%%%%%% REFERENCES
{\small
\bibliographystyle{ieee_fullname.bst}
\bibliography{reference.bib}
}

\end{document}